\documentclass{research}

\usepackage{mathtools}
\usepackage{amsthm}
\usepackage{algorithm}
\usepackage{algpseudocode}
\usepackage{bbm}
\usepackage{tikz}
\usepackage{adjustbox}
\usepackage{overpic}
\usepackage{rotating}
\usepackage{enumitem}
\usepackage{url}
\usepackage{wrapfig}
\usepackage{booktabs}
\usepackage{array}
\usepackage{subcaption}
\usepackage{pgfplots}
\pgfplotsset{compat=1.17}

\usepackage{mdframed}

\theoremstyle{definition}

\newtheorem{lem}{Lemma}

\newtheorem{cor}{Corollary}

\usepackage{booktabs}
\usepackage{array}
\usepackage{subcaption}
\usepackage{tikz}
\usepackage{pgfplots}
\pgfplotsset{compat=1.17}

\definecolor{gold}{rgb}{0.83, 0.69, 0.22}

\newcommand{\E}{\mathbb{E}}

\newcommand{\R}{\mathbb{R}}

\RequirePackage[authoryear]{natbib}

\definecolor{darkblue}{RGB}{40,70,140}
\definecolor{sand}{RGB}{248,245,235}
\definecolor{sandborder}{RGB}{220,215,200}
\definecolor{parchment}{RGB}{252,249,240}
\definecolor{parchmentborder}{RGB}{222,215,195}
\definecolor{sage}{RGB}{244,248,244}
\definecolor{sageborder}{RGB}{200,210,200}
\definecolor{lavender}{RGB}{247,246,250}
\definecolor{lavenderborder}{RGB}{210,205,225}

\title{Continuity Laws for Sequential Models}

\author[1]{Annan Yu}
\author[2]{Dongwei Lyu}
\author[3,4]{N. Benjamin Erichson}

\affiliation[1]{Center for Applied Mathematics, Cornell University}
\affiliation[2]{University of California, Berkeley}
\affiliation[3]{Lawrence Berkeley National Laboratory}
\affiliation[4]{International Computer Science Institute}

\abstract{%
Inductive biases influence the behavior and performance of sequential models. In this work, we study an underexplored inductive bias in sequential modeling: continuity in time. We ask a simple question: do models motivated by continuous-time formulations, such as state-space models, actually behave continuously in time, and does this translate into better performance on tasks with continuous temporal structure? To answer this, we formalize model continuity as convergence under temporal refinement, where a model is continuous if its predictions approach an underlying continuous trajectory as the temporal discretization is refined. We show that S4 exhibits stable continuous behavior, whereas S6 (the core of Mamba) can be more sensitive to input amplitude and selective dynamics, despite being derived from a continuous dynamical system. To study whether this distinction matters for learning, we also need a corresponding notion of task continuity. We therefore introduce a metric to quantify the continuity of datasets directly from their temporal structure. Across benchmarks, we find a clear empirical alignment between task continuity, model continuity, and model performance. Beyond an inductive bias, continuity also has practical consequences: we show that it enables a simple temporal subsampling strategy that improves both efficiency and performance.
}

\correspondence{Annan Yu @ \email{ay262@cornell.edu}}

\begin{document}
	
	\maketitle

\section{Introduction}

The recent surge of interest in sequential modeling, spanning language~\citep{vaswani2017attention,devlin2019bert,brown2020language}, time series~\citep{ansari2024chronos,das2023decoder}, and scientific data~\citep{li2020fourier,lu2019deeponet,li2020neural}, has produced many new architectures that achieve strong performance on specific benchmarks, often at specific scales and under carefully chosen training recipes. However, benchmark performance alone gives only a partial view of model utility. It tells us what worked in one regime, but not why it worked, when it should continue to work, or when it is likely to fail. This is not a criticism of benchmarking; benchmarks are essential, but they are most useful when paired with principles that explain what is being measured. In more classical areas such as numerical analysis, algorithms are often judged not only by their performance on test problems, but also by the properties they preserve: stability, convergence, and sensitivity to discretization~\citep{lax1956survey,trefethen2019approximation}. For sequential models, we often lack analogous principles, making it hard to know whether a model has captured coherent structure in the data, or whether it is narrowly tuned to a particular temporal regime.

Neural scaling laws~\citep{kaplan2020scaling} provide one example of such a principle: they describe how performance changes as model and data size vary. While these laws are insightful, they answer only part of the question. Scaling laws tell us what may happen as a model becomes larger or sees more data; they say less about whether the model has the right structure for the problem in the first place. Scaling a mismatched architecture may improve performance, but it does not by itself make the architecture well-matched to the structure of the task. This matters especially in scientific and engineering applications, where data and compute are often limited, and where the structure of the problem is not arbitrary. In such settings, the question is not only ``what can a sufficiently large model learn?'' but ``what does this model tend to learn?'' This brings inductive biases to the forefront. For sequential models, several such biases are already familiar~\citep{yu2025understanding,ebrahimi2026induction}, including frequency~\citep{yu2024tuning,piao2024fredformer}, recency~\citep{hegazyrecency}, locality~\citep{beltagy2020longformer}, and multi-scale structure~\citep{chung2016hierarchical}. They help explain how a model processes time, memory, and variation across scales, but they do not exhaust the ways in which a model can be matched, or mismatched, to the temporal structure of a task. In this paper, we take a new perspective by asking:

\begin{tcolorbox}[
  colback=parchment,
  colframe=parchmentborder,
  boxrule=0.4pt,
  left=10pt,
  right=10pt,
  top=6pt,
  bottom=6pt,
  arc=2pt
]
\centering
\emph{Do models motivated by continuous-time formulations actually behave continuously in time, and does this help on tasks with continuous temporal structure?}
\end{tcolorbox}

\begin{figure}[!t]
    \centering
    \includegraphics[width=0.99\linewidth]{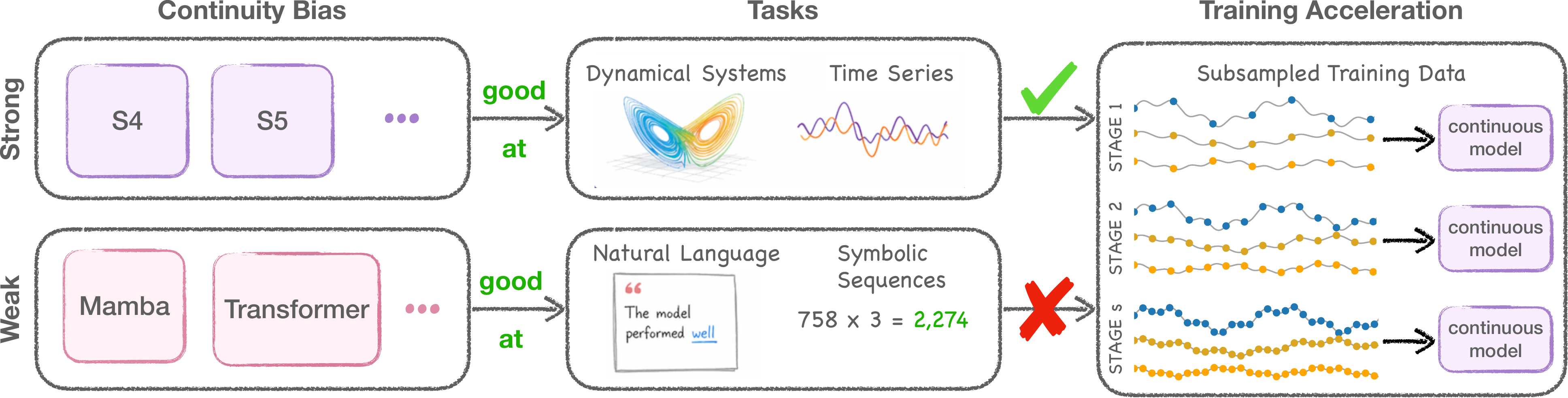}
\caption{Continuity as a principle for model--task fit. Sequential models differ in how continuously they represent time, and tasks differ in how much continuous temporal structure they contain. Models with stronger temporal continuity are better matched to continuous problems such as dynamical systems and time series, while models with weaker continuity may be better matched to more discrete problems such as text or symbolic operations. When a model and task are both continuous, this structure can also be exploited algorithmically. An example is stage-wise training, where training begins on temporally subsampled data and progressively refines the resolution.}
    \label{fig:signature}
\end{figure}

On the model side, architectures differ in how they represent time. Transformers~\citep{vaswani2017attention} operate through discrete token-to-token interactions. State-space models (SSMs) such as S4~\citep{gu2021efficiently}, S5~\citep{smith2022simplified}, and S6~\citep{gu2023mamba} take a different route: they are derived from continuous dynamical systems and are often described as continuous-time models. On the task side, it is intuitive that, for example, language is discrete and time series are continuous.
By a \textit{continuity law}, we mean a principle that characterizes the continuity of the model, the continuity of the task, and consequences when the two are aligned or misaligned. These ideas are made precise via three questions (see~\Cref{fig:signature}):
\begin{itemize}[leftmargin=*]
    \item \textbf{What is a continuous model?}
    For models with an underlying continuous-time formulation, we formalize continuity through convergence under temporal refinement: when the same underlying signal is sampled more finely, the model's discrete predictions should approach a common continuous trajectory. The speed of this convergence separates models in a way that their continuous-time motivation alone does not. S4 and S5 exhibit stable continuous behavior, whereas the convergence of S6 is very sensitive to input amplitude and selective dynamics.

    \item \textbf{What is a continuous task?}
    Sequential data have different degrees of continuous temporal structure, depending on how nearby observations relate to each other and how that relationship changes across time. Task continuity is not a binary label: we treat task continuity as a measurable property of the data rather than as a semantic label such as ``language'' or ``time series.'' We introduce a metric that quantifies how strongly nearby elements in a sequence resemble one another relative to far-apart elements, placing datasets on a continuum from more discrete to more continuous.

    \item \textbf{How do model and task continuity interact?}
    Once both notions are defined, we can ask whether they align. Across controlled dynamical systems and Long-Range Arena (LRA) benchmarks, we find that models with stronger continuity perform better as tasks become more continuous. This identifies continuity as an inductive bias for reasoning about model--task fit.
\end{itemize}

These notions treat continuity as an explanatory principle to help characterize models, measure tasks, and predict when the two are well matched. However, continuity can also be used constructively. As a concrete example, we revisit the state-space structure of SSMs. Prior work on efficient SSMs has focused on reducing the state dimension, thereby compressing the hidden dynamics~\citep{padhy2026aire,chahine2025curious,gwak2024layer}. We take a complementary view: \textit{if a model behaves continuously in time, then one can also compress the time axis}. This leads to a stage-wise training strategy that begins with temporally subsampled inputs and progressively refines the resolution. While this is not the main focus of the paper, it illustrates a broader point: continuity is not only a diagnostic property, but a structure that can be exploited.

See Appendix~\ref{app:relatedwork} for related work. We summarize the main contributions of this paper as follows.

\begin{enumerate}[leftmargin=*]
    \item We introduce a theoretical framework for assessing continuity in SSM-like models via temporal refinement. This framework shows that a continuous-time origin does not by itself guarantee continuous behavior after discretization: S4 and S5 remain well behaved under refinement, but S6 is considerably more affected by input magnitude and selective dynamics. We support our conclusion with numerical studies and experiments on $4$ classes of dynamical systems.

    \item We propose a data-driven metric for task continuity based on local temporal similarity. By comparing nearby elements in a sequence to far-apart ones, the metric places datasets along a spectrum from discrete to continuous. On both controlled dynamical systems and Long-Range Arena benchmarks~\citep{tay2020long}, the resulting scores agree well with the intuitive continuity of the tasks.

    \item We empirically demonstrate a strong match between model continuity and task continuity. Across S4, S6, and Transformer architectures, we observe that models with stronger temporal continuity become increasingly favorable as the task itself becomes more continuous. These results support continuity as a useful inductive bias for understanding model--task compatibility.

    \item We show that continuity is useful not only for analysis, but also for algorithm design. Motivated by the temporal stability of S4, we invent a stage-wise training method that starts from temporally coarsened inputs and gradually restores full resolution. On five of the six LRA tasks, we achieve \textit{any} reachable accuracy in less wall-clock time, and in many cases even improve the final accuracy.
\end{enumerate}

\section{A continuous framework for state-space models}

Suppose $\mathbf{u}_0, \ldots, \mathbf{u}_{L-1} \in \R^d$ is a sequence of input vectors. A sequential module maps this input to an output sequence $\mathbf{y}_0, \ldots, \mathbf{y}_{L-1} \in \R^d$. For example, a (single-head) attention layer computes
\[
    \mathbf{y}_k = \textcolor{gold}{\mathbf{W}_V} \mathbf{U} \;\text{softmax}\!\left(\mathbf{u}_k^\top \textcolor{gold}{\mathbf{W}_Q}^\top \textcolor{gold}{\mathbf{W}_K} \mathbf{U} / \sqrt{d}\right)^\top, \qquad 
    \mathbf{U} = \begin{bmatrix} \mathbf{u}_0 & \cdots & \mathbf{u}_{L-1} \end{bmatrix} \in \R^{d \times L},
\]
where we use gold to highlight the trainable parameters. The output is a weighted combination of $\mathbf{W}_V \mathbf{u}_0, \ldots, \mathbf{W}_V \mathbf{u}_{L-1}$, and the entire computation is inherently discrete. For simplicity of exposition, we assume $d = 1$ from now on; extending the discussion to higher dimensions is straightforward. SSMs take a different way: they begin with a continuous-time dynamical system:
\begin{equation}\label{eq.contdyn}
    \begin{aligned}
        \mathbf{x}'(t) = \Delta(u(t)) \bigl(\mathbf{A} \mathbf{x}(t) + \mathbf{B}(u(t)) u(t)\bigr), \quad y(t) = \mathbf{C}(u(t)) \mathbf{x}(t) + D u(t),
    \end{aligned}
    \qquad \mathbf{x}(0) = \boldsymbol{0}.
\end{equation}
Here $\mathbf{A} \in \R^{n \times n}$, $\mathbf{B}: \R \to \R^{n \times 1}$, $\mathbf{C}: \R \to \R^{1 \times n}$, $D \in \R$, and $\Delta: \R \to (0,\infty)$. 
To connect this continuous system to discrete data, we view $u_0, \ldots, u_{L-1}$ as samples of an underlying continuous signal $u(t)$. The output sequence is then obtained by discretizing the dynamics, for instance using zero-order hold (ZOH) or bilinear discretization:
\begin{equation}\label{eq.discretizeLTI}
    \begin{aligned}
        &\textbf{(ZOH)} \qquad 
        \overline{\mathbf{A}}_k =  \exp( \Delta(u_k) \mathbf{A}), \quad 
        \overline{\mathbf{B}}_k = \mathbf{A}^{-1} (\overline{\mathbf{A}}_k - \mathbf{I}_n) \mathbf{B}(u_k), \quad 
        \overline{\mathbf{C}}_k = \mathbf{C}(u_k), \\
        &\textbf{(Bilinear)} \qquad 
        \begin{aligned}
            \overline{\mathbf{A}}_k &=  \left(\mathbf{I}_n - (1/2)\Delta(u_k) \mathbf{A}\right)^{-1} \left(\mathbf{I}_n + (1/2)\Delta(u_k) \mathbf{A}\right), \\
            \overline{\mathbf{B}}_k &= \left(\mathbf{I}_n - (1/2)\Delta(u_k) \mathbf{A}\right)^{-1}  \Delta(u_k) \mathbf{B}(u_k), \quad 
            \overline{\mathbf{C}}_k = \mathbf{C}(u_k),
        \end{aligned}
    \end{aligned}
\end{equation}
followed by the discrete recursion
\begin{equation}\label{eq.disdyn}
    \begin{aligned}
        \mathbf{x}_{k+1} = \overline{\mathbf{A}}_k \mathbf{x}_k + \overline{\mathbf{B}}_k u_k, \quad y_k = \overline{\mathbf{C}}_k \mathbf{x}_k + D u_k,
    \end{aligned}
    \qquad \mathbf{x}_0 = \boldsymbol{0}.
\end{equation}

The key difference between S4 and S6 lies in how $\mathbf{B}$, $\mathbf{C}$, and $\Delta$ are defined:
\begin{equation}\label{eq.S4}
\begin{aligned}
        &\textbf{(S4)} \qquad 
    \mathbf{B}(u) \equiv \textcolor{gold}{\mathbf{B}},\quad 
    \mathbf{C}(u) \equiv \textcolor{gold}{\mathbf{C}}, \quad 
    \Delta(u) \equiv \textcolor{gold}{\Delta}, \\
    &\textbf{(S6)} \qquad 
    \mathbf{B}(u) = \textcolor{gold}{\mathbf{B}}u,\quad 
    \mathbf{C}(u) = \textcolor{gold}{\mathbf{C}}u, \quad 
    \Delta(u) = \text{softplus}(\textcolor{gold}{{w}_\Delta} u + \textcolor{gold}{b_\Delta}).
\end{aligned}
\end{equation}
In S4, they are constant, resulting in a linear time-invariant system. In contrast, S6 introduces input dependency, leading to a nonlinear, time-varying system.

\section{How continuous is a continuous model?}\label{sec:contmodel}

\begin{wrapfigure}{r}{0.5\textwidth}
  \begin{center}
  \vspace{-1\baselineskip}
    \includegraphics[width=0.45\textwidth]{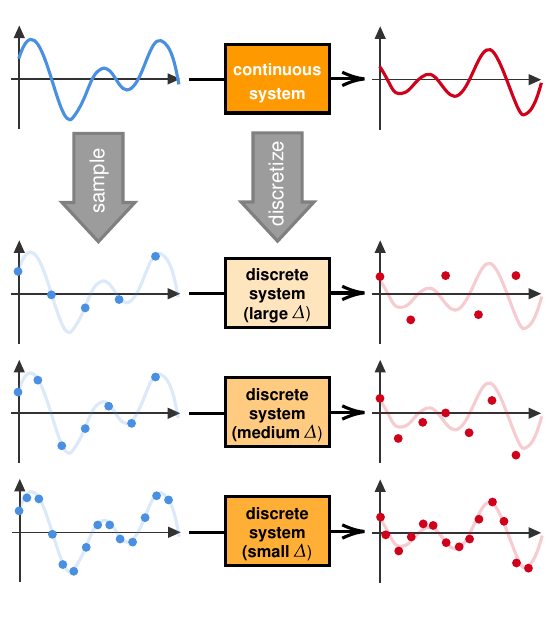}
    \vspace{-2\baselineskip}
  \end{center}
  \caption{We assess a model's continuity by measuring how quickly the discrete output converges to the continuous one under temporal refinement.}
  \label{fig:thmillustrate}
\end{wrapfigure}

SSMs are often described as continuous-time models, but how accurate is this description in practice? To answer this, we need a concrete notion of continuity. A natural starting point is the following: the discrete outputs $y_0, \ldots, y_{L-1}$ should closely match the continuous trajectory $y(t)$ evaluated at the same timestamps. However, this raises an immediate issue: in practice, we are only given a discrete sequence $u_0, \ldots, u_{L-1}$, with no guarantee that it arises from a meaningful continuous signal $u(t)$. Without such a signal, the notion of a ``continuous output'' $y(t)$ is ill-defined.

To make progress, we adopt a slightly different viewpoint (see~\Cref{fig:thmillustrate}). For now, we assume access to an underlying continuous input $u: [0,1] \to \R^d$, which induces a continuous output $y(t)$. We then sample $u$ on a uniform grid with step size $\tau > 0$:
\[
    u(0), u(\tau), \ldots, u(\lfloor 1/\tau \rfloor \tau).
\]
This produces a discrete input sequence, from which the model generates outputs.

We say that an SSM behaves continuously if, under temporal refinement, i.e., as $\tau \to 0^+$, the discrete outputs converge to the continuous trajectory at a fast rate. In other words, finer sampling of the same underlying signal should lead to more accurate approximations of the continuous solution. There is only one subtlety that remains: the discretization rules in~\cref{eq.discretizeLTI} implicitly assume a unit time step and do not account for the refinement parameter $\tau$. To properly capture the effect of temporal refinement, we modify the discretization as follows:
\begin{equation}\label{eq.discretizeLTItau}
    \begin{aligned}
        &\textbf{(ZOH)} \qquad 
        \overline{\mathbf{A}}^\tau_k =  \exp(\tau \Delta(u(\tau k)) \mathbf{A}), \;\; 
        \overline{\mathbf{B}}^\tau_k = \mathbf{A}^{-1} (\overline{\mathbf{A}}^\tau_k \!-\! \mathbf{I}_n) \mathbf{B}(u(\tau k)), \;\; 
        \overline{\mathbf{C}}^\tau_k = \mathbf{C}(u(\tau k)), \\
        &\textbf{(Bilinear)} \qquad 
        \begin{aligned}
            \overline{\mathbf{A}}^\tau_k &=  \left(\mathbf{I}_n - \frac{\tau \Delta(u(\tau k))}{2} \mathbf{A}\right)^{-1} \left(\mathbf{I}_n + \frac{\tau \Delta(u(\tau k))}{2} \mathbf{A}\right), \\
            \overline{\mathbf{B}}^\tau_k &= \left(\mathbf{I}_n - \frac{\tau \Delta(u(\tau k))}{2} \mathbf{A}\right)^{-1} \tau \Delta(u(\tau k)) \mathbf{B}(u(\tau k)), \quad 
            \overline{\mathbf{C}}^\tau_k = \mathbf{C}({u}(\tau k)).
        \end{aligned}
    \end{aligned}
\end{equation}

The only change is to replace $u(k)$ with $u(\tau k)$ and $\Delta$ with $\tau \Delta$. Using this notion of continuity, we theoretically quantify the convergence rate of the continuous system in the general form of~\cref{eq.contdyn}.

\begin{lem}\label{lem.discretize}
    Let $u(t): [0,1] \rightarrow \R$ be $L_u$-Lipschitz continuous. Let $\mathbf{A} \in \R^{n \times n} $, $\mathbf{B}: \R \rightarrow \R^{n \times 1}, \mathbf{C}: \R \rightarrow \R^{1 \times n}$, ${D} \in \R$, and $\Delta: \R \rightarrow (0, \infty)$ be given, where $\mathbf{B}$, $\mathbf{C}$, and $\Delta$ are $L_B$-, $L_C$-, and $L_\Delta$-Lipschitz continuous (in $\|\cdot\|_2$-norm), respectively. Let
    \[
	M_u \!=\! \sup_{t \in [0,1]}\|u(t)\|_2, \;\;
	M_B \!=\! \sup_{t \in [0,1]}\|\mathbf{B}(u(t))\|_2,\;\;
	M_C \!=\! \sup_{t \in [0,1]}\|\mathbf{C}(u(t))\|_2,\;\;
	M_\Delta \!=\! \sup_{t \in [0,1]} \Delta(u(t))
    \]
    be the maximum moduli of $\mathbf{B}, \mathbf{C}$, and $\Delta$ achieved by $u(t)$, respectively.
    Consider the output $y(t): [0,1] \rightarrow \R$ defined by the continuous system in~\cref{eq.contdyn}. For any $0 < \tau < 1$, let $\mathbf{y}^\tau = (y_0^\tau, \ldots, y_{\lfloor 1 / \tau \rfloor}^\tau)$ be the output of the discrete system
    \[
    	\begin{aligned}
		\mathbf{x}_{k+1}^\tau = \overline{\mathbf{A}}^\tau_k \mathbf{x}^\tau_k + \overline{\mathbf{B}}^\tau_k u(\tau k), \quad y_k^\tau = \overline{\mathbf{C}}^\tau_k \mathbf{x}_{k}^\tau + \overline{D} u(\tau k),
	\end{aligned} \qquad\qquad \mathbf{x}^\tau_0 = \boldsymbol{0},
    \]
    where $\overline{\mathbf{A}}^\tau_k, \overline{\mathbf{B}}^\tau_k,$ and $\overline{\mathbf{C}}^\tau_k$ are defined by either the ZOH or the bilinear formulas in~\cref{eq.discretizeLTItau}.
    Then, for every $1 \leq k \leq \lfloor 1 / \tau \rfloor$, we have
    \begin{equation*}
    	|y_k^\tau \!-\! y(\tau k)| \leq M_C \frac{M_\Delta (L_B M_u \!+\! M_B) L_u \!+\! L_\Delta L_u M_B M_u e^{M_\Delta \|\mathbf{A}\|_2}}{M_\Delta \|\mathbf{A}\|_2}  (\text{exp}(M_\Delta \|\mathbf{A}\|_2) \!-\! 1) \tau + \mathcal{O}(\tau^2).
    \end{equation*}
    Moreover, if $\overline{\mathbf{A}}^\tau_k, \overline{\mathbf{B}}^\tau_k,$ and $\overline{\mathbf{C}}^\tau_k$ are obtained via the ZOH formula, then the $\mathcal{O}(\tau^2)$ term vanishes.
\end{lem}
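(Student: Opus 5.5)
The strategy is to compare the discrete recursion with the exact variation-of-constants representation of the continuous state over each cell $[\tau j, \tau(j+1)]$. Write $a(t) = \Delta(u(t))$ and $b(t) = \Delta(u(t))\mathbf{B}(u(t))u(t)$. By Lipschitz continuity of the products involved, $b$ is Lipschitz with constant
\[
L_b \le M_\Delta(L_B M_u + M_B)L_u + L_\Delta L_u M_B M_u,
\]
and $a$ is $L_\Delta L_u$-Lipschitz.

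\textbf{Continuous-time representation.} Over one step,
\[
\mathbf{x}(\tau(j+1)) = \Phi(\tau(j+1),\tau j)\,\mathbf{x}(\tau j) + \int_{\tau j}^{\tau(j+1)} \Phi(\tau(j+1),s)\,b(s)\,ds,
\qquad \Phi(t,s) = \exp\!\Big(\mathbf{A}\!\int_s^t a\Big),
\]
where $\Phi$ is the propagator of the scalar-times-$\mathbf{A}$ system. Since every $a(t)\mathbf{A}$ commutes with every other, $\Phi$ is exactly this exponential.

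\textbf{ZOH as frozen coefficients.} The ZOH step is precisely the same formula with $a$ and $b$ frozen at their left endpoint values $a(\tau j)$ and $b(\tau j)$. Indeed, $\mathbf{A}^{-1}(e^{\tau a\mathbf{A}}-\mathbf{I})\mathbf{B} = \int_0^\tau e^{(\tau-s)a\mathbf{A}}a\mathbf{B}\,ds$, which is the frozen forcing integral.

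\textbf{Local error and error recursion.} Define $\mathbf{e}_j = \mathbf{x}^\tau_j - \mathbf{x}(\tau j)$. Then
\[
\mathbf{e}_{j+1} = \overline{\mathbf{A}}_j^\tau\,\mathbf{e}_j + \big(\overline{\mathbf{A}}_j^\tau - \Phi_j\big)\,\mathbf{x}(\tau j) + \int\big(\text{frozen}-\text{true}\big).
\]
I will bound each piece. First, $\|\overline{\mathbf{A}}^\tau_j\| \le e^{\tau M_\Delta\|\mathbf{A}\|}$. Second, the forcing difference is at most $\int_0^\tau e^{M_\Delta\|\mathbf{A}\|\tau}L_b s\,ds$. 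This is $O(\tau^2)$ locally, and the leading coefficient $L_b\tau^2/2$ will sum up correctly. Third, the propagator difference is at most $\|\mathbf{A}\|e^{\tau M_\Delta\|\mathbf{A}\|}\int|a(s)-a(\tau j)|\,ds$ times $\|\mathbf{x}(\tau j)\|$. For the state bound I use $\|\mathbf{x}(t)\| \le M_\Delta M_B M_u(e^{M_\Delta\|\mathbf{A}\|}-1)/(M_\Delta\|\mathbf{A}\|)$, or more crudely a factor $M_BM_u e^{M_\Delta\|\mathbf{A}\|}$. This propagator term is where the $L_\Delta L_u M_B M_u e^{M_\Delta\|\mathbf{A}\|}$ contribution in the stated constant originates.

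\textbf{Discrete Gronwall.} Unrolling $\|\mathbf{e}_{j+1}\| \le e^{\tau M_\Delta\|\mathbf{A}\|}\|\mathbf{e}_j\| + c\,\tau^2$ with $\mathbf{e}_0 = 0$ gives
\[
\|\mathbf{e}_k\| \le c\,\tau^2\,\frac{e^{k\tau M_\Delta\|\mathbf{A}\|}-1}{e^{\tau M_\Delta\|\mathbf{A}\|}-1} \le \frac{c\,\tau}{M_\Delta\|\mathbf{A}\|}\big(e^{M_\Delta\|\mathbf{A}\|}-1\big),
\]
using $e^x - 1 \ge x$ and $k\tau \le 1$. This produces exactly the factor $(e^{M_\Delta\|\mathbf{A}\|}-1)/(M_\Delta\|\mathbf{A}\|)$, with no $O(\tau^2)$ remainder in the ZOH case. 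For that to hold, I must keep the local bounds exact rather than Taylor-expanded; the local factors $e^{\tau M_\Delta\|\mathbf{A}\|}$ and the $1/2$ from $\int s\,ds$ have to be absorbed, for instance by bounding $\int_0^\tau e^{(\tau-s)\cdot}\,s\,ds$ carefully.

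\textbf{Output.} Finally, $y^\tau_k - y(\tau k) = \mathbf{C}(u(\tau k))\,\mathbf{e}_k$, because $\overline{\mathbf{C}}^\tau_k$ and $D$ are evaluated at the exact sample. This gives the prefactor $M_C$.

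\textbf{Bilinear case.} The bilinear $\overline{\mathbf{A}}^\tau_k$ and $\overline{\mathbf{B}}^\tau_k$ differ from the ZOH ones by $O(\tau^3)$ and $O(\tau^2)$ per step respectively, being Padé/trapezoid approximations. Adding these to the local error only changes higher-order terms, so summing via the same Gronwall argument yields the $\mathcal{O}(\tau^2)$ remainder. The bilinear $\overline{\mathbf{A}}^\tau_k$ also needs a norm bound of the form $1 + \tau M_\Delta\|\mathbf{A}\| + O(\tau^2)$, valid for small $\tau$.

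\textbf{Main obstacle.} The hard part is matching the precise constant: combining the propagator-mismatch term with the forcing term so that the sum collapses to the displayed expression without stray factors of $1/2$ or $e^{\tau\cdot}$. If exact matching fails, I would first try bounding every local factor $e^{\tau M_\Delta\|\mathbf{A}\|}$ by the global $e^{M_\Delta\|\mathbf{A}\|}$ only in the $\Delta$-term, as the statement suggests. Failing that, I would write the continuous solution at $\tau k$ as one global integral and compare it directly with the discrete sum, rather than comparing step by step.
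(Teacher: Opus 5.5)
Your cell-by-cell route (one-step defect plus a discrete Gr\"onwall) differs from the paper's and does give a first-order bound. As written, though, it does not prove the sharp part of the lemma: the displayed constant with no remainder for ZOH, for every $\tau\in(0,1)$. You leave exactly this as the unresolved ``main obstacle,'' and the problem sits in your local decomposition.

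Your estimate $\int_0^\tau e^{M_\Delta\|\mathbf A\|_2\tau}L_b s\,ds$ only controls $\int\Phi(\tau(j+1),s)\bigl(b(\tau j)-b(s)\bigr)\,ds$. The exact Duhamel term uses the time-varying propagator $\Phi$, whereas the ZOH term uses $e^{(\tau(j+1)-s)a(\tau j)\mathbf A}$. So you have dropped the piece $\int\bigl(e^{(\tau(j+1)-s)a(\tau j)\mathbf A}-\Phi(\tau(j+1),s)\bigr)b(\tau j)\,ds$, which is of order $\tau^3 M_\Delta\|\mathbf A\|_2L_\Delta L_uM_BM_u$ per step. It is irrelevant at leading order but sums to an $\mathcal O(\tau^2)$ global term. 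Together with the factor $\tfrac{\tau^2}{2}e^{\tau M_\Delta\|\mathbf A\|_2}$ in your propagator-mismatch term, it can only be absorbed by spending the slack from the $\tfrac12$. That works for $\tau$ small enough, but your bookkeeping does not deliver the bound uniformly on $(0,1)$.

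The bilinear paragraph has a separate error. If $\overline{\mathbf B}^\tau_k$ really differed from its ZOH counterpart by $\mathcal O(\tau^2)$ per step, the discrepancies would accumulate to $\mathcal O(\tau)$ over $1/\tau$ steps and change the leading constant. The argument survives only because, with $h=\tau\Delta(u(\tau k))$ and $\mathbf B$ evaluated at $u(\tau k)$, both are $h\mathbf B+\tfrac{h^2}{2}\mathbf A\mathbf B+\cdots$. The true difference is therefore $\tfrac{h^3}{12}\mathbf A^2\mathbf B=\mathcal O(\tau^3)$.

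The missing idea is to never split the propagator mismatch from the forcing mismatch, and to charge the former against the true state $\mathbf x(t)$. The paper does this globally. Because $\Delta$ and $\mathbf B$ are frozen at the step input $v^\tau(t)=u(\tau\lfloor t/\tau\rfloor)$, the ZOH recursion is the exact grid restriction of the continuous system driven by $v^\tau$; this is your ``frozen coefficients'' observation, applied on all of $[0,1]$ at once. Then $\boldsymbol\varepsilon=\boldsymbol\xi^\tau-\mathbf x$ solves $\boldsymbol\varepsilon'=\Delta(v^\tau)\mathbf A\boldsymbol\varepsilon+\Delta(v^\tau)\bigl(\mathbf B(v^\tau)v^\tau-\mathbf B(u)u\bigr)+\bigl(\Delta(v^\tau)-\Delta(u)\bigr)\bigl(\mathbf A\mathbf x+\mathbf B(u)u\bigr)$. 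Its inhomogeneity is at most $c\tau$, with $c$ the numerator in the statement, and one continuous Gr\"onwall finishes the proof for every $\tau$.

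If you want to keep your discrete scheme, define the defect through the frozen ODE restarted at $\mathbf x(\tau j)$. It satisfies $\boldsymbol\delta'=a(\tau j)\mathbf A\boldsymbol\delta+\bigl(a(\tau j)-a(t)\bigr)\mathbf A\mathbf x(t)+\bigl(b(\tau j)-b(t)\bigr)$, with inhomogeneity at most $c\,(t-\tau j)$. Here you must use $\|\mathbf A\mathbf x(t)\|_2\le M_BM_u(e^{\lambda}-1)$, where $\lambda=M_\Delta\|\mathbf A\|_2$, and not the cruder $M_BM_ue^{\lambda}$, since $L_b$ already supplies the remaining $L_\Delta L_uM_BM_u$. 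This gives $\|\boldsymbol\delta_j\|_2\le c\,(e^{\lambda\tau}-1-\lambda\tau)/\lambda^2$. The inequality $e^x-1-x\le x(e^x-1)$ then turns your geometric sum into exactly $\frac{c\tau}{\lambda}(e^{\lambda k\tau}-1)$ for all $\tau$, and asymptotically half of it. With this fix your route matches the paper's bound and slightly sharpens it.
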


See Appendix~\ref{app:proofs} for a proof of the lemma and the corollaries presented in the rest of this section. \Cref{lem.discretize} shows that the discrete output converges to the continuous output at a linear rate $\mathcal{O}(\tau)$. This is consistent with the classical behavior of standard discretizations such as ZOH and bilinear under smooth dynamics~\citep{oppenheim2013}. What we care about, however, is the constant in the $\mathcal{O}$-notation for this linear convergence rate. The following two corollaries are immediate consequences of~\Cref{lem.discretize}.

\begin{cor}\label{cor:S4discrete}
Suppose $\mathbf{B}, \mathbf{C}, D$, and $\Delta$ are defined by an S4 system (see~\cref{eq.S4}). Then, under the setting of~\Cref{lem.discretize}, we have
\[
	|y_k^\tau - y(\tau k)| \leq \frac{\|\mathbf{C}\|_2 \|\mathbf{B}\|_2 L_u (\text{exp}(\Delta \|\mathbf{A}\|_2) - 1)}{\|\mathbf{A}\|_2} \tau + \mathcal{O}(\tau^2).
\]
\end{cor}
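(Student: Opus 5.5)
The plan is to specialize \Cref{lem.discretize} directly to the S4 parametrization in~\cref{eq.S4}. In S4 the maps $\mathbf{B}(u)\equiv\mathbf{B}$, $\mathbf{C}(u)\equiv\mathbf{C}$ and $\Delta(u)\equiv\Delta$ are constant. Being constant, they are Lipschitz with constants $L_B = L_C = L_\Delta = 0$, so every hypothesis of the lemma holds for any $L_u$-Lipschitz input $u$. The suprema in the lemma are then trivial: $M_B = \|\mathbf{B}\|_2$, $M_C = \|\mathbf{C}\|_2$ and $M_\Delta = \Delta$. The quantity $M_u$ remains but should drop out.

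Next I would substitute these values into the bound of \Cref{lem.discretize}. The numerator of the prefactor is
\[
M_\Delta (L_B M_u + M_B) L_u + L_\Delta L_u M_B M_u e^{M_\Delta\|\mathbf{A}\|_2}.
\]
With $L_B = L_\Delta = 0$ this collapses to $\Delta \|\mathbf{B}\|_2 L_u$, and the second term, which carries the exponential and $M_u$, disappears entirely. Dividing by $M_\Delta\|\mathbf{A}\|_2 = \Delta\|\mathbf{A}\|_2$ cancels the factor $\Delta$. Multiplying by $M_C = \|\mathbf{C}\|_2$ and by $(\exp(\Delta\|\mathbf{A}\|_2)-1)\tau$ gives exactly
\[
\frac{\|\mathbf{C}\|_2\|\mathbf{B}\|_2 L_u(\exp(\Delta\|\mathbf{A}\|_2)-1)}{\|\mathbf{A}\|_2}\,\tau + \mathcal{O}(\tau^2),
\]
which is the claimed inequality. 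The ZOH refinement, in which the $\mathcal{O}(\tau^2)$ term vanishes, carries over verbatim, although the corollary does not need it.

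There is no real obstacle; the only points to check are bookkeeping. First, the lemma's division by $M_\Delta\|\mathbf{A}\|_2$ requires $\|\mathbf{A}\|_2 > 0$. This is implicit in the statement: if $\mathbf{A} = \boldsymbol{0}$, one interprets $(e^{x}-1)/x$ as its limit $1$, and ZOH is read in the same limiting sense. Second, the feedthrough term $D u$ is identical in the continuous and discrete outputs, so it contributes nothing, consistent with $D$ not appearing in the bound. Third, the $\mathcal{O}(\tau^2)$ remainder inherited from the lemma depends only on the S4 constants and $u$, so it remains a valid $\mathcal{O}(\tau^2)$ term after specialization.
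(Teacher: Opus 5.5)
Your proposal is correct and matches the paper's proof. Both substitute $L_B = L_C = L_\Delta = 0$, $M_B = \|\mathbf{B}\|_2$, $M_C = \|\mathbf{C}\|_2$, and $M_\Delta = \Delta$ into \Cref{lem.discretize}, note that the exponential term carrying $M_u$ vanishes, and cancel the factor $\Delta$ between numerator and denominator; your additional remarks on $\|\mathbf{A}\|_2 > 0$, the feedthrough term, and the inherited $\mathcal{O}(\tau^2)$ remainder are harmless extras.
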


\begin{cor}\label{cor:S6discrete}
Suppose $\mathbf{B}, \mathbf{C}, D$, and $\Delta$ are defined by an S6 system (see~\cref{eq.S4}). Then, under the setting of~\Cref{lem.discretize}, we have
\[
	|y_k^\tau - y(\tau k)| \leq \frac{
\|\mathbf W_{\mathbf C}\|_2 \|\mathbf W_{\mathbf B}\|_2 M_u^2 L_u
\left(2M_\Delta + |w_\Delta|M_u e^{M_\Delta\|\mathbf A\|_2}\right)
}{
M_\Delta\|\mathbf A\|_2
}
\left(e^{M_\Delta\|\mathbf A\|_2}-1\right) \tau  + \mathcal{O}(\tau^2),
\]
where
\[
M_\Delta=\text{softplus}(|w_\Delta|M_u+b_\Delta).
\]
\end{cor}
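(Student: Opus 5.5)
The plan is to specialize \Cref{lem.discretize} to the S6 parametrization in \cref{eq.S4}. This means computing the four constants $M_B, M_C, L_B, L_\Delta$ in terms of $\mathbf W_{\mathbf B}, \mathbf W_{\mathbf C}, w_\Delta, b_\Delta$ and $M_u$, and then substituting and simplifying. I read $\mathbf B$, $\mathbf C$ in \cref{eq.S4} as the weights $\mathbf W_{\mathbf B}$, $\mathbf W_{\mathbf C}$, so that $\mathbf B(u) = \mathbf W_{\mathbf B} u$ and $\mathbf C(u) = \mathbf W_{\mathbf C} u$.

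\textbf{Sup bounds.} Since $\|\mathbf B(u(t))\|_2 = \|\mathbf W_{\mathbf B}\|_2 |u(t)|$, we get $M_B \le \|\mathbf W_{\mathbf B}\|_2 M_u$, and likewise $M_C \le \|\mathbf W_{\mathbf C}\|_2 M_u$. Softplus is increasing, so $\Delta(u(t)) = \text{softplus}(w_\Delta u(t) + b_\Delta) \le \text{softplus}(|w_\Delta| M_u + b_\Delta)$, which is the stated $M_\Delta$.

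\textbf{Lipschitz constants.} The map $u \mapsto \mathbf W_{\mathbf B} u$ is linear, so $L_B = \|\mathbf W_{\mathbf B}\|_2$. Softplus has derivative equal to the sigmoid, which lies in $(0,1)$, so softplus is $1$-Lipschitz and $L_\Delta \le |w_\Delta|$.

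\textbf{The technical point.} The lemma demands global Lipschitz constants for $\mathbf B, \mathbf C, \Delta$ as functions on $\R$, but its bound uses only the sup quantities along the trajectory $u(t)$. The linear maps and softplus are genuinely globally Lipschitz, so there is no issue there. Monotonicity of the right-hand side of the lemma is also needed: it is increasing in each of $M_B, M_C, L_B, L_\Delta$. In $M_\Delta$ it is increasing too, since $(e^{x}-1)/x$ and $e^{x}$ are increasing and $M_\Delta$ also appears in a numerator factor. Replacing each constant by the upper bound above is therefore legitimate.

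\textbf{Substitution.} Plugging in, the bracket in the lemma becomes
\[
M_\Delta\bigl(\|\mathbf W_{\mathbf B}\|_2 M_u + \|\mathbf W_{\mathbf B}\|_2 M_u\bigr) L_u + |w_\Delta| L_u \|\mathbf W_{\mathbf B}\|_2 M_u^2 e^{M_\Delta\|\mathbf A\|_2}
= \|\mathbf W_{\mathbf B}\|_2 M_u L_u\bigl(2M_\Delta + |w_\Delta| M_u e^{M_\Delta\|\mathbf A\|_2}\bigr).
\]
Multiplying by $M_C \le \|\mathbf W_{\mathbf C}\|_2 M_u$ produces the factor $M_u^2$. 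Dividing by $M_\Delta\|\mathbf A\|_2$ and multiplying by $(e^{M_\Delta\|\mathbf A\|_2}-1)\tau$ then gives exactly the displayed bound.

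The $\mathcal O(\tau^2)$ remainder is inherited from the lemma unchanged. No genuine obstacle arises; the only care needed is the monotonicity check that justifies substituting upper bounds for $M_\Delta$.
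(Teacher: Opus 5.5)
Your proposal is correct and follows the paper's proof: you identify $L_B=\|\mathbf W_{\mathbf B}\|_2$, $L_\Delta=|w_\Delta|$, $M_B=\|\mathbf W_{\mathbf B}\|_2M_u$, $M_C=\|\mathbf W_{\mathbf C}\|_2M_u$ and $M_\Delta\le\text{softplus}(|w_\Delta|M_u+b_\Delta)$, then substitute into \Cref{lem.discretize} and simplify. Your check that the bound is increasing in $M_\Delta$ is a worthwhile addition. The paper treats $\text{softplus}(|w_\Delta|M_u+b_\Delta)$ as if it were the exact supremum of $\Delta(u(t))$, but in general it is only an upper bound, and monotonicity is what justifies substituting it.
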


Corollary~\ref{cor:S4discrete} and~\ref{cor:S6discrete} show that both S4 and S6 converge at first order in the refinement scale $\tau$, but the key difference lies in the prefactor. For S4, the continuity constant depends only on the fixed system parameters $(\mathbf A,\mathbf B,\mathbf C,\Delta)$ and the input regularity $L_u$. In particular, once the system is fixed, the dependence on the input enters only through its temporal smoothness.
By contrast, for S6, the constant is substantially more sensitive: in addition to $L_u$, it depends on the input amplitude $M_u$, on the magnitudes of the projections $\|\mathbf W_{\mathbf B}\|_2$ and $\|\mathbf W_{\mathbf C}\|_2$, and on the selectivity parameter $|w_\Delta|$.
Hence, the continuity breaks down for large-amplitude inputs or strongly selective delta-dynamics.

\begin{figure}[!htb]
    \centering
    \begin{minipage}{0.6\textwidth}
        \begin{overpic}[width=1\textwidth]{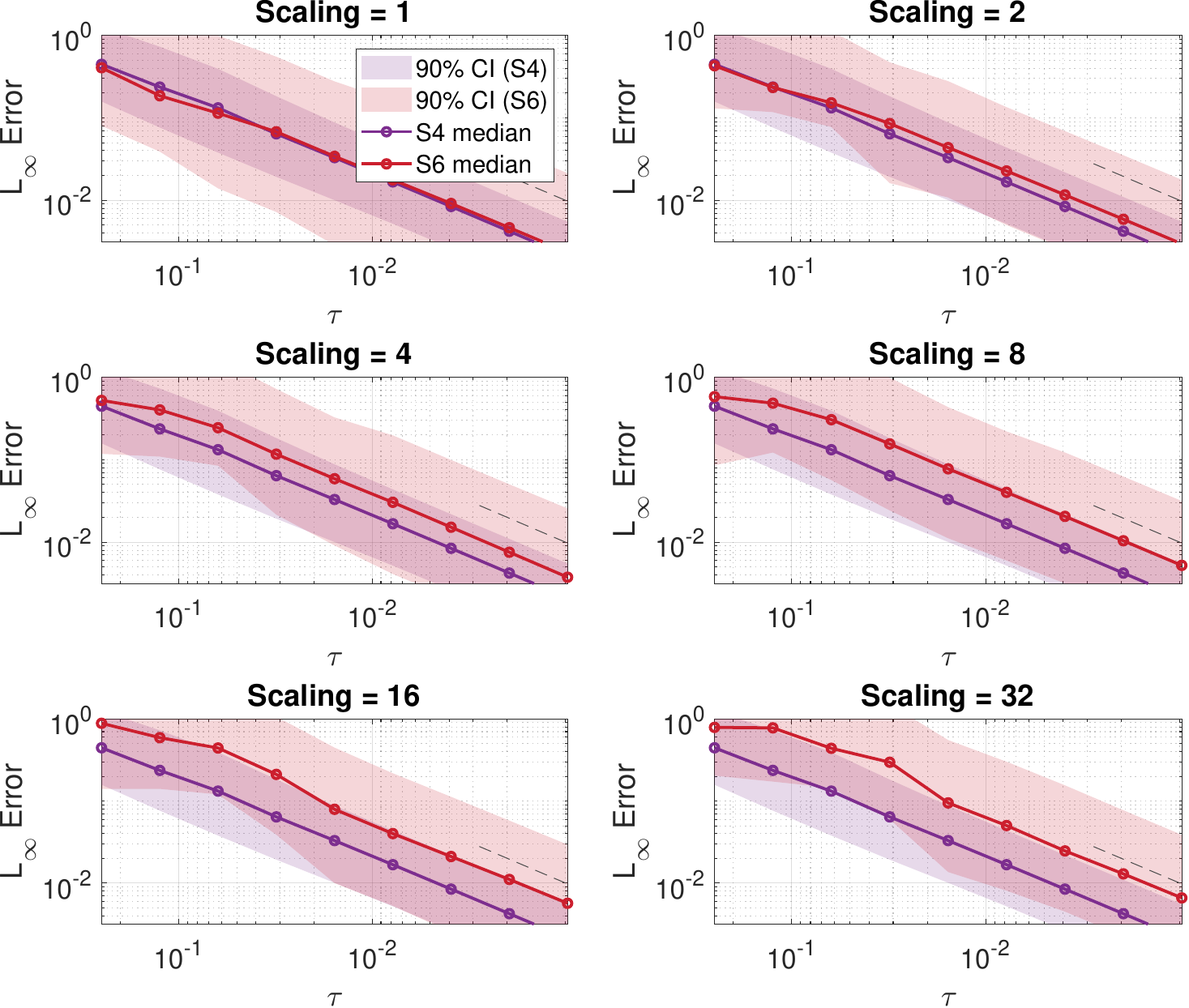}
            
        \end{overpic}
    \end{minipage}
    \hfill
    \begin{minipage}{0.3\textwidth}
        \begin{overpic}[width=1\textwidth]{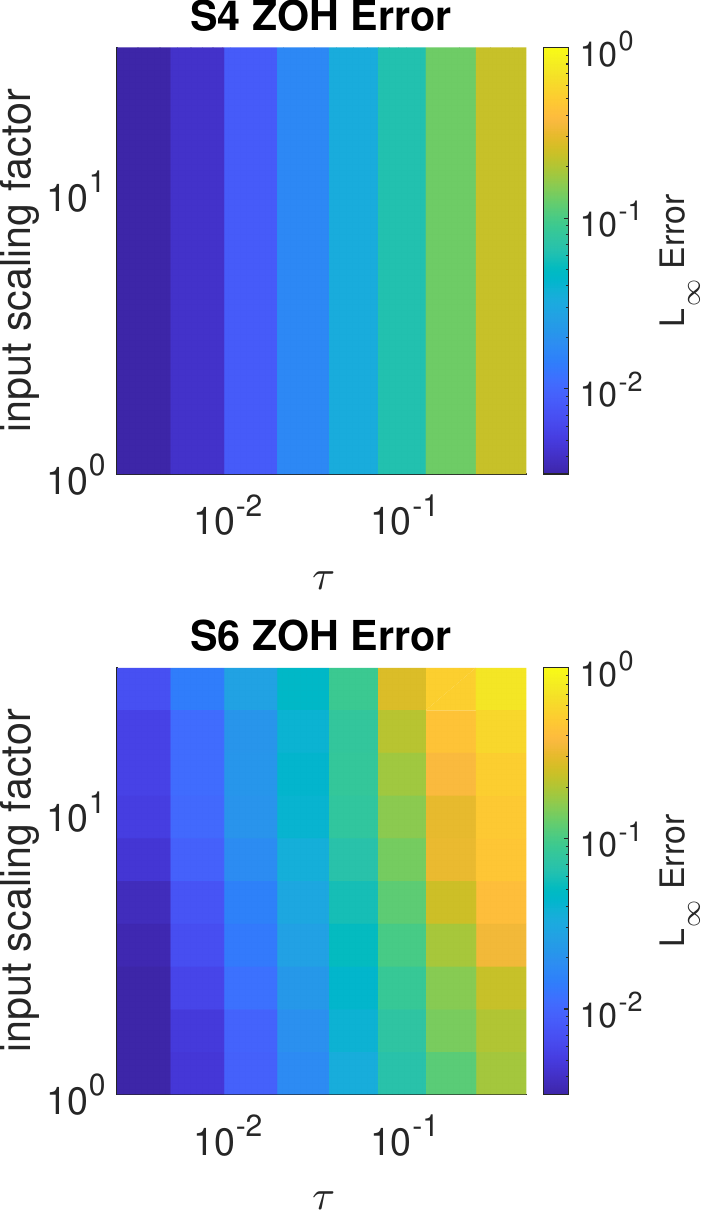}
            
        \end{overpic}
    \end{minipage}
    \caption{\textbf{Left:} Relative maximum error $\max_{1 \leq k \leq \lfloor 1 / \tau \rfloor} |y_k^\tau - y(\tau k)| \big/ \max_{1 \leq k \leq \lfloor 1 / \tau \rfloor} |y(\tau k)|$ as $\tau \to 0^+$, evaluated over $20$ randomly sampled system--input pairs. The input is scaled by factors ranging from $1$ to $32$. The gray reference line indicates first-order (linear) convergence. \textbf{Right:} Heatmap of the same relative error. For S4, the error remains stable as the input is scaled, whereas for S6, it increases significantly with input magnitude.}
    \label{fig:numericaldiscretize}
\end{figure}

To illustrate these phenomena numerically, we sample $20$ random S4 and S6 systems, together with $20$ random input functions. For each system--input pair, we discretize the input using a range of step sizes $\tau \in \{2^{-10}, \ldots, 2^{-2}\}$ and compare the resulting outputs $\mathbf{y}^\tau$ to the corresponding continuous trajectory $y(\cdot)$. The latter is approximated using a much finer discretization ($\tau = 2^{-14}$) together with a high-accuracy Runge--Kutta solver. We further probe the sensitivity to input amplitude by scaling each input function by a factor greater than $1$ and repeating the experiment. While this scaling also increases the Lipschitz constant $L_u$, we report the relative error $\max_{1 \leq k \leq \lfloor 1 / \tau \rfloor} |y_k^\tau - y(\tau k)| \big/ \max_{1 \leq k \leq \lfloor 1 / \tau \rfloor} |y(\tau k)|$ to normalize for this effect.
As shown in~\Cref{fig:numericaldiscretize}, the relative error of S4 remains stable under input scaling, whereas the error of S6 increases rapidly. This empirical behavior aligns with our theoretical predictions and highlights the fundamentally different continuity properties of the two models.

\paragraph{Empirical evidence for S6's discontinuity.} Our comparison of S4 and S6 so far is primarily theoretical, or at best numerical. You may wonder: do these findings actually matter in practice? To answer this, we study a downstream task based on dynamical systems data. In this setting, each sample is generated from an underlying parameterized dynamical system, and the goal is to recover its parameters from observed trajectories. To cover a broad range of behaviors, we consider four representative types of systems (see Appendix~\ref{app:sec3exp} for more details.): 
\begin{itemize}[leftmargin=*]
\item a linear system (damped harmonic oscillator),
\item a nonlinear system (Van der Pol oscillator),
\item a stochastic system (Ornstein--Uhlenbeck process), and
\item a chaotic system (forced Duffing oscillator).
\end{itemize}

\begin{figure}[!htb]
    \centering
    \begin{overpic}[width=1\textwidth]{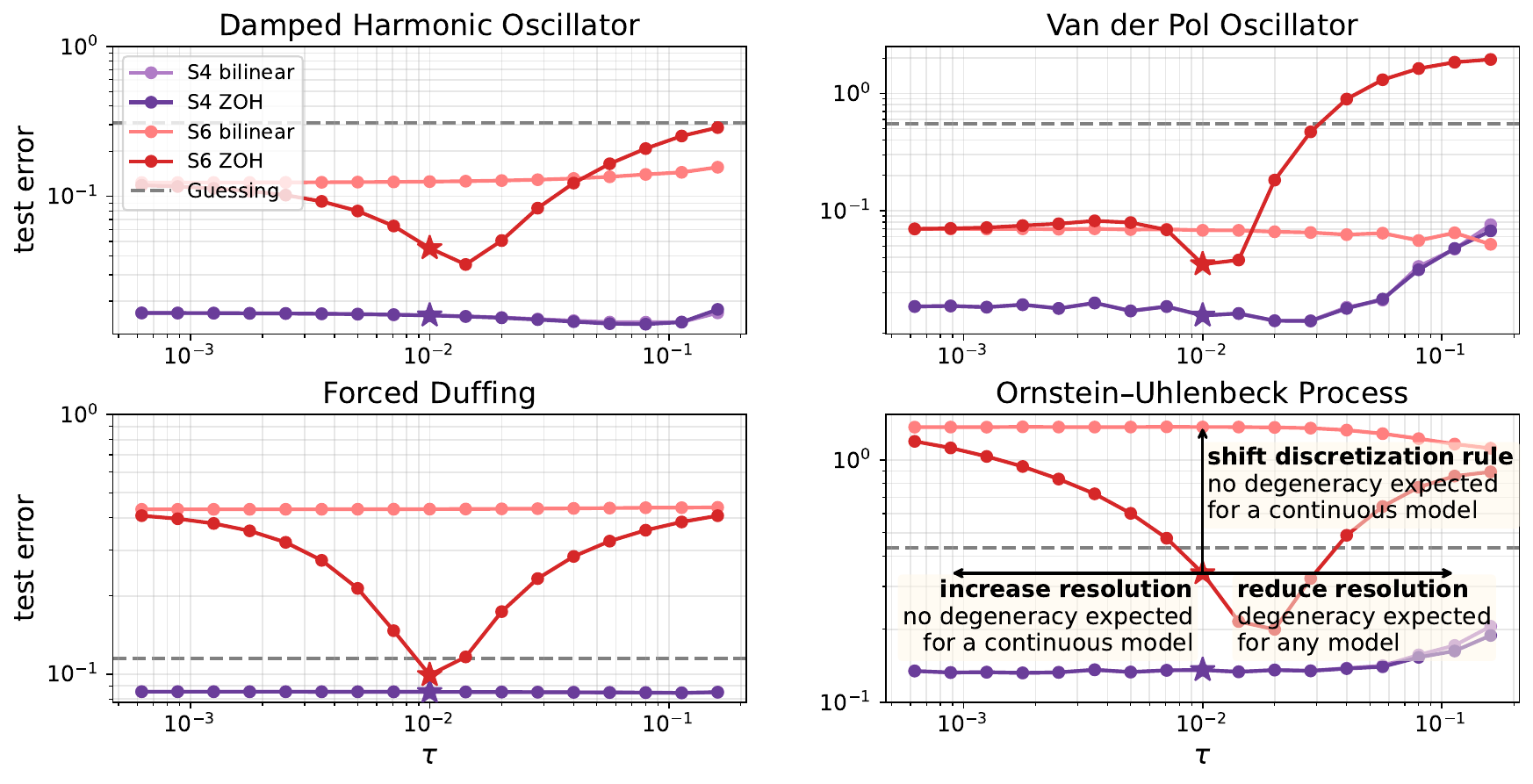}
            
    \end{overpic}
    \caption{Test error as a function of the inference-time sampling rate $\tau$ across four dynamical systems. For each model, we evaluate both ZOH and bilinear discretizations. The star marks the resolution $\tau_0 = 10^{-2}$ and the ZOH discretization used for training, and the dashed line indicates the expected error of a guessing baseline. S4 is stable across changes in $\tau$ and discretization, but S6 is sensitive to both, with performance degrading under refinement and when switching discretization schemes.}
    \label{fig:DSresample}
\end{figure}

We construct the training data by sampling these systems at a fixed rate $\tau_0 = 10^{-2}$ and, during training, adopt the ZOH discretization. Our emphasis here is not that S4 generally outperforms S6; instead, we compare a learned model against itself under two types of distribution shifts:

\begin{enumerate}[leftmargin=*]
    \item \textbf{Changing resolution.} Still using ZOH discretization, we resample the same systems at a different rate $\tau$ at inference time, and adjust the $\Delta$ parameter in the S4 or S6 model by a factor of $\tau / \tau_0$ to account for the change in resolution. Intuitively, if a model has learned the underlying continuous dynamics, then refining the sampling (i.e., $\tau < \tau_0$) should not degrade its performance (see also~\citet{queiruga2020continuous}). As shown in~\Cref{fig:DSresample}, this holds for S4: its performance remains stable under temporal refinement. In contrast, S6 degrades as the resolution increases, indicating that its discrete outputs do not faithfully track the underlying continuous dynamics.

    \item \textbf{Changing discretization rule.} We also vary the discretization method at inference time, switching from ZOH to bilinear. This provides a complementary test: if a model truly captures the continuous dynamics, its predictions should be largely invariant to the choice of discretization. As shown in~\Cref{fig:DSresample}, S4 is robust to this change, with little difference between the two schemes. In contrast, S6 exhibits noticeable degradation when switching to bilinear discretization, even at the same sampling rate $\tau = \tau_0$ used for training.
\end{enumerate}

\section{Are continuous models better at learning continuous tasks?}\label{sec:contdata}

So far, we have shown that certain models (e.g., S4) more faithfully capture underlying continuous dynamics than others (e.g., S6). Intuitively, such models should be better suited for continuous tasks. But does this intuition actually hold in practice? In this section, we first design a controlled experiment to test this hypothesis, and then move toward formalizing the notion of continuous tasks so that we can examine this effect across a broader range of problems. Throughout, we focus on three representative model classes: S4 (continuous formulation and behavior), S6 (continuous formulation but not in behavior), and Transformers (purely discrete formulation).

To begin, we construct a controlled setting in which we can gradually vary the ``degree of continuity'' of a task. At first glance, this might seem straightforward: one could simply compare performance across tasks with different levels of continuity. However, this approach introduces two key challenges:
\begin{enumerate}[leftmargin=*]
    \item \textbf{Control over difficulty.} Tasks differ in intrinsic difficulty. That a model is better at a ``continuous'' task than a ``discrete'' one does not necessarily indicate an advantage in handling continuity.
    \item \textbf{Control over other inductive biases.} Models and tasks may also differ along other dimensions, such as frequency content or long-range dependency. Without careful control, other inductive biases of models can confound any conclusions about continuity.
\end{enumerate}
To address the first issue, we compare the \emph{relative performance} of S4, S6, and Transformers as the task transitions from discrete to continuous. To address the second, we base all experiments on the same underlying dynamical systems, thereby isolating continuity as the primary varying factor. Concretely, we consider a forecasting problem on trajectories generated from the four dynamical systems described in~\cref{sec:contmodel}. Given a normalized trajectory $u: [0,1] \rightarrow [-1,1]$, we construct two representations in $\R^{16}$:
\[
    \mathbf{U}_{\text{cts}}(t) = u(t)\mathbf{v}, \qquad 
    \mathbf{U}_{\text{dct}}(t) = \sum_{i=1}^{16} \mathbbm{1}_{\{u(t) \in R_i\}} \mathbf{q}_i,
\]
where $\mathbf{v} \in \R^{16}$ is a fixed random unit vector, $\{\mathbf{q}_i\}_{i=1}^{16}$ form an orthonormal basis in $\R^{16}$, and $\{R_i\}_{i=1}^{16}$ partition $[-1,1]$ into equal intervals. The first embedding preserves the continuity of the signal, while the second introduces quantization and produces a more discrete representation (see~\citet{yu2025understanding} for more discussion). We then form models' input by interpolating between these two regimes by defining
\begin{equation}\label{eq.mixcontdis}
    \mathbf{U}_\eta(t) = \eta \mathbf{U}_{\text{cts}}(t) + (1-\eta)\mathbf{U}_{\text{dct}}(t), \qquad \eta \in [0,1].
\end{equation}
Therefore, as $\eta$ increases from $0$ to $1$, the task transitions smoothly from discrete to continuous.

\begin{figure}[!htb]
    \centering
    \begin{overpic}[width=1\textwidth]{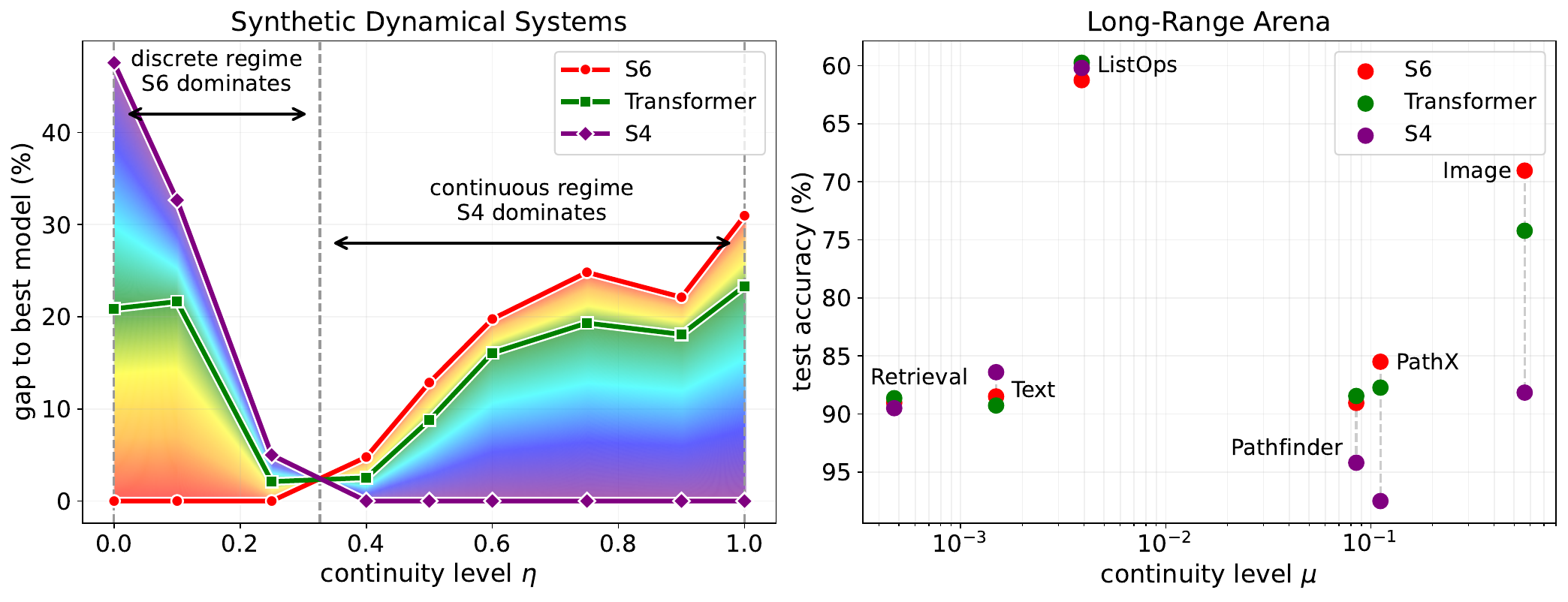}
            
    \end{overpic}
    \caption{\textbf{Left:} Relative $L_\infty$ forecasting error gap to the best model, defined as $\mathrm{gap}(\mathcal{M})=(e_\mathcal{M}-e_{\min})/e_{\min}$, where $e_{\min}=\min_{\mathcal{M}'} e_{\mathcal{M}'}$ and $e_{\mathcal{M}}$ is the $L_\infty$ forecasting error of model $\mathcal{M}$, for S4, S6, and Transformers on dynamical systems data as the underlying representation becomes more continuous. \textbf{Right:} Computed continuity score $\mu = \sum_{t=1}^{16} \mu_t / 16$ for the six LRA tasks, together with the reported accuracies of S4, S6, and Transformers. In both panels, S4 gradually overtakes S6 and Transformers as the tasks become more continuous. See Appendix~\ref{app:sec4exp} for more details.}
    \label{fig:continuoustasks}
\end{figure}

For each fixed continuity level $\eta$, we train an S4, an S6, and a Transformer to forecast the dynamical systems. Results in~\Cref{fig:continuoustasks} (Left) reveal a clear pattern: on more discrete tasks (i.e., when $\eta$ is small), S4 underperforms S6 and Transformers, but as the task becomes more continuous, S4 begins to outperform both. Importantly, all tasks are derived from the same underlying dynamical systems; only the representation changes. This makes clear that continuity, by itself, can substantially alter how these models perform.

\begin{table}[!htb]
\centering
\caption{Normalized relative gaps in the dynamical system forecasting error across continuity levels $\eta$, with cell colors computed from row-normalized values so that each model has comparable total mass across the row while preserving within-row proportions. We see that the better predictions happen at the lower-left part of the table, where both the model and the problem are discrete, and at the upper-right part of the table, where both the model and the problem are continuous.}

\setlength{\tabcolsep}{5pt}
\renewcommand{\arraystretch}{1.15}
\begin{tabular}{lccccccccc}
\toprule
 & 0 & 0.1 & 0.25 & 0.4 & 0.5 & 0.6 & 0.75 & 0.9 & 1 \\
\midrule
S4
& \cellcolor[HTML]{F8696B}0.630 & \cellcolor[HTML]{EF6E6C}0.560 & \cellcolor[HTML]{DA7A6E}0.418 & \cellcolor[HTML]{9D9D75}0.112 & \cellcolor[HTML]{63BE7B}0.000 & \cellcolor[HTML]{63BE7B}0.000 & \cellcolor[HTML]{63BE7B}0.000 & \cellcolor[HTML]{63BE7B}0.000 & \cellcolor[HTML]{63BE7B}0.000 \\
DeltaNet
& \cellcolor[HTML]{E5746D}0.494 & \cellcolor[HTML]{DF776E}0.450 & \cellcolor[HTML]{C18971}0.271 & \cellcolor[HTML]{9D9D75}0.114 & \cellcolor[HTML]{81AD78}0.033 & \cellcolor[HTML]{97A075}0.093 & \cellcolor[HTML]{999F75}0.098 & \cellcolor[HTML]{94A276}0.083 & \cellcolor[HTML]{94A276}0.083 \\
Transformer
& \cellcolor[HTML]{C68670}0.299 & \cellcolor[HTML]{D27F6F}0.368 & \cellcolor[HTML]{C98470}0.319 & \cellcolor[HTML]{9D9D75}0.114 & \cellcolor[HTML]{8DA676}0.064 & \cellcolor[HTML]{9E9C75}0.116 & \cellcolor[HTML]{A49974}0.140 & \cellcolor[HTML]{A29A74}0.131 & \cellcolor[HTML]{AB9573}0.169 \\
LinAttn
& \cellcolor[HTML]{7EAE78}0.029 & \cellcolor[HTML]{B88E72}0.225 & \cellcolor[HTML]{BE8A71}0.256 & \cellcolor[HTML]{A99673}0.161 & \cellcolor[HTML]{BF8A71}0.261 & \cellcolor[HTML]{C38771}0.283 & \cellcolor[HTML]{A19B74}0.127 & \cellcolor[HTML]{AB9573}0.169 & \cellcolor[HTML]{B49072}0.209 \\
S6
& \cellcolor[HTML]{A49974}0.137 & \cellcolor[HTML]{B19173}0.196 & \cellcolor[HTML]{CC8270}0.335 & \cellcolor[HTML]{A79774}0.150 & \cellcolor[HTML]{9B9E75}0.105 & \cellcolor[HTML]{A99673}0.161 & \cellcolor[HTML]{B39072}0.203 & \cellcolor[HTML]{AE9373}0.181 & \cellcolor[HTML]{BD8B71}0.253 \\
Mamba3
& \cellcolor[HTML]{63BE7B}0.000 & \cellcolor[HTML]{63BE7B}0.000 & \cellcolor[HTML]{63BE7B}0.000 & \cellcolor[HTML]{63BE7B}0.000 & \cellcolor[HTML]{C38771}0.281 & \cellcolor[HTML]{BD8A71}0.254 & \cellcolor[HTML]{D57D6F}0.387 & \cellcolor[HTML]{BC8B71}0.245 & \cellcolor[HTML]{EE6F6C}0.553 \\
\bottomrule
\end{tabular}
\label{tab:gap_heatmap}
\end{table}

In~\Cref{tab:gap_heatmap}, we show further results with three additional models: Linear Attention~\citep{katharopoulos2020transformers}, DeltaNet~\citep{yang2024parallelizing}, and Mamba-3~\citep{lahoti2026mamba}. We report the ``normalized relative gap,'' a normalized version of the relative gap demonstrated in~\Cref{fig:continuoustasks}. See Appendix~\ref{app:sec4exp} for a detailed definition and motivation.

While the controlled experiment reveals the advantage of continuous models on continuous tasks (and vice versa), most real-world problems do not admit a representation such as~\cref{eq.mixcontdis}. To study continuity in a broader setting, we introduce a metric that can be applied to a general sequential dataset. At a high level, we measure how similar nearby elements in a sequence are, compared to pairs that are far apart. A dataset with stronger continuity should exhibit higher local similarity that decays gradually with temporal separation. To make this precise, let $\mathbf{u} = (\mathbf{u}_0,\ldots,\mathbf{u}_{L-1})$ denote a sequence drawn from the dataset distribution $\mathcal{D}$, where each $\mathbf{u}_k \in \R^d$. Let $K:\R^d \times \R^d \to [0,1]$ be a similarity kernel, e.g, the cosine kernel $K(\mathbf{x},\mathbf{y}) = \left(1 + \mathbf{x}^\top \mathbf{y} / \|\mathbf{x}\|_2\,\|\mathbf{y}\|_2 \right) / 2$. For a lag parameter $t \geq 1$ and a fixed gap $\varrho \gg t$, we define
\[
    \mu_t
    \!:=\!
    \frac{
        \E_{\mathbf{u} \sim \mathcal{D}}
        \E_{\,k \sim \text{Unif}(0:L-t-1)}
        \bigl[K(\mathbf{u}_k,\mathbf{u}_{k+t})\bigr]
        \!-\! \beta
    }{
        \E_{\mathbf{u} \sim \mathcal{D}}
        \E_{\,k \sim \text{Unif}(0:L-1)}
        \bigl[K(\mathbf{u}_k,\mathbf{u}_k)\bigr]
        \!-\! \beta   
    }, \quad \beta \!=\! \E_{\mathbf{u} \sim \mathcal{D}}
        \E_{(k,k’) \sim \text{Unif}(\{|k - k’| > \varrho\})}
        \bigl[K(\mathbf{u}_k,\mathbf{u}_{k’})\bigr].
\]
In practice, we approximate these expectations by averaging over sequences in a dataset, uniformly sampled valid positions $k$, and randomly sampled far-apart pairs $(k,k’)$. For an overall continuity score, we aggregate the local metrics over a range of small lags, for example, by taking $\mu = \sum_{t=1}^T w_t \mu_t$, where the weights $w_t$ decrease with $t$, and $T$ sets the maximum lag under consideration.

We do not wish to claim that this is the best metric for measuring dataset continuity, and we hope future work will develop more alternatives and compare them systematically. Still, the metric $\mu_t$ proposed here has several appealing properties: it does not depend explicitly on the sequence length $L$; its dependence on the ambient dimension $d$ is entirely absorbed into the choice of kernel $K$, so once $K$ is specified, the metric applies seamlessly across different representation spaces; the correction term $\beta$ removes the background similarity between unrelated or far-apart elements, so that $\mu_t$ reflects local continuity rather than the global geometry of the representation space (e.g., see~\citet{yu2025understanding2}); and unlike a pointwise quantity such as a Lipschitz constant, $\mu_t$ provides a more stable measurement by aggregating the continuity structure across the entire sequences.

By computing $\mu$ for the six Long-Range Arena (LRA) tasks, we indeed find that tasks that are intuitively more continuous, such as flattened images, receive much higher continuity scores than intuitively discrete tasks such as text\footnote{To compute $\mu$, one first needs to embed the raw data into a hidden space. Here, we use the encoders of pretrained S4 models; using the encoders of S6 models or Transformers leads to very similar scores.}; moreover, $\mu$ indeed positively correlates to $\eta$ in our synthetic dataset (see~\Cref{fig:eta_mu}). Using this metric, in~\Cref{fig:continuoustasks} (Right), we observe that S4 gradually outperforms S6 and Transformers as the tasks become more continuous. This provides further evidence that models with stronger continuity are better suited to the more continuous tasks.

\section{Beyond inductive biases: What does continuity buy us?}\label{sec:accelerate}

\begin{wrapfigure}{r}{0.54\textwidth}
    \vspace{-1em}
    \begin{minipage}{0.52\textwidth}
    \captionsetup{type=algorithm}
    \begin{mdframed}[linewidth=0.8pt,roundcorner=3pt,innermargin=6pt]
    \small
    \begin{algorithmic}[1]
        \State \textbf{Input:} Training set $\mathcal{D}=\{(\mathbf{u}^{(i)},{y}^{(i)})\}$, stage strides $r_1 > \cdots > r_S = 1$, epochs $E_1,\ldots,E_S$
        \State \textbf{Output:} Trained model parameters
        \State Initialize S4 parameters $(\mathbf{A},\mathbf{B},\mathbf{C},\mathbf{D},\Delta)$
        \State $r_0 \gets r_1$
        \For{$s=1,\ldots,S$}
            \State $\Delta \gets \Delta \cdot (r_{s} / r_{s-1})$ \Comment{See~\cref{eq.discretizeLTItau}}
            \ForAll{$(\mathbf{u}^{(i)},{y}^{(i)}) \in \mathcal{D}$}
                \State Form subsampled input $\mathbf{u}^{(i,s)} = \bigl(\mathbf{u}^{(i)}_0,\mathbf{u}^{(i)}_{r_s},\mathbf{u}^{(i)}_{2r_s},\ldots\bigr)$\footnote{Average pooling can be used instead for discrete tasks.}
            \EndFor
            \State Train the model on dataset $\mathcal{D}^{(s)}=\{(\mathbf{u}^{(i,s)},{y}^{(i)})\}$ for $E_s$ epochs.
        \EndFor
    \end{algorithmic}
    \end{mdframed}
    \captionof{algorithm}{Stage-wise training via temporal subsampling. At each stage, the input is subsampled by stride $r_s$ and $\Delta$ is scaled by the same factor.}
    \label{alg:subsample}
    \end{minipage}
\end{wrapfigure}

At this point, we have identified continuity as an important inductive bias in sequential modeling. This has several consequences. For instance, it can help us anticipate which models are more likely to succeed on continuous or discrete tasks without fully training them, and it suggests how one might tune SSMs toward problems with different levels of continuity. But is continuity only useful as an inductive bias? In fact, continuity has broader algorithmic implications. We conclude the paper with one such application: accelerating training.

Training is often the most computationally expensive part of using sequential models. Existing acceleration methods, often inspired by reduced-order modeling, typically focus on compressing the state dimension, i.e., the hidden state $\mathbf{x}$ in~\cref{eq.contdyn}~\citep{chahine2025curious}. Such approaches can reduce the cost of later training epochs. Here, we show that continuity in S4 suggests a complementary direction: as observed in~\Cref{fig:DSresample}, S4 remains stable under temporal refinement. This motivates a natural speculation: if an S4 model can adapt well to higher-resolution data, why not begin training at a lower resolution?

This leads to a stage-wise training strategy. We start by subsampling the input sequence to a coarse temporal resolution, train the model on this cheaper version of the task, and then progressively refine the resolution while continuing to train the same model (see~\Cref{alg:subsample}). One may also view each stage as fine-tuning the model on a more resolved version of the data.

\begin{figure}[!htb]
    \centering
    \begin{overpic}[width=1\textwidth]{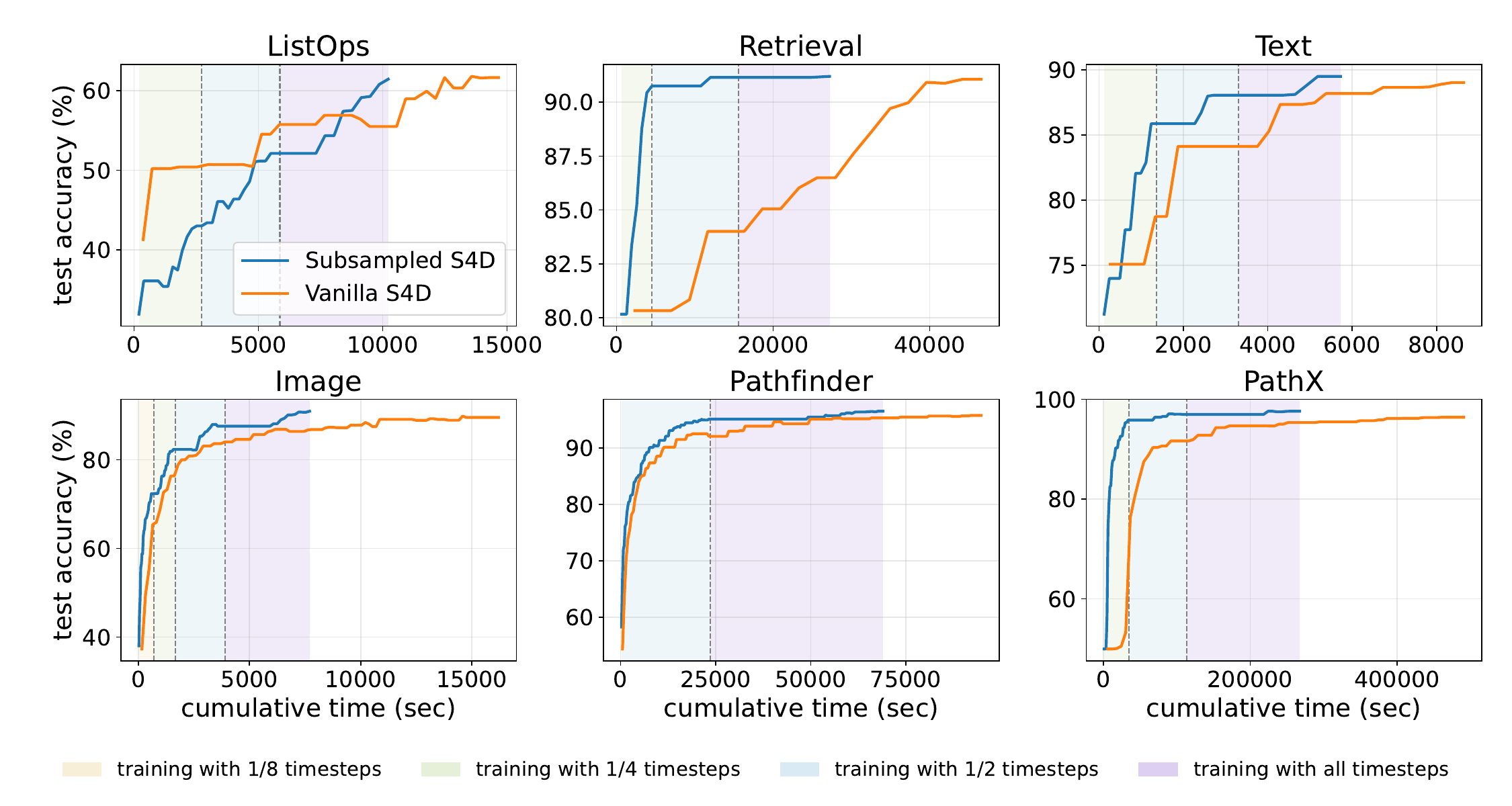}
            
    \end{overpic}
    \caption{Test accuracy plotted against cumulative training time on the six LRA tasks. We compare standard training with our stage-wise subsampling strategy, which starts from low-resolution inputs and progressively refines the temporal resolution during training. On five of the six tasks, the time--accuracy curve of our method stays above that of standard training, indicating that it reaches essentially any target accuracy in less wall-clock time. On several tasks, especially the more continuous ones, it also attains a higher final accuracy. All experiments are done on a single NVIDIA L40S GPU.}
    \label{fig:subsampledtraining}
\end{figure}

\begin{table}[!htb]
\centering
\caption{Relative time needed to reach target test accuracies on Long-Range Arena. Each numeric entry reports the ratio between the time needed by the subsampled S4D model and the time needed by the vanilla S4D model. Values below $1$ indicate faster convergence for the subsampled model.}
\label{tab:lra_time_accuracy_ratios}

\begin{subtable}{0.31\textwidth}
\centering
\caption{ListOps}
\scriptsize
\setlength{\tabcolsep}{3.2pt}
\renewcommand{\arraystretch}{1.05}
\begin{tabular}{c|ccccc}
\toprule
Acc. & 42 & 44 & 46 & 48 & 50 \\
\cmidrule(lr){1-6}
Ratio & \cellcolor[HTML]{F8696B}5.60 & \cellcolor[HTML]{F8696B}6.49 & \cellcolor[HTML]{F8696B}7.15 & \cellcolor[HTML]{F8696B}7.23 & \cellcolor[HTML]{F8696B}6.67 \\
\midrule
Acc. & 52 & 54 & 56 & 58 & 60 \\
\cmidrule(lr){1-6}
Ratio & \cellcolor[HTML]{FED981}1.07 & \cellcolor[HTML]{F8696B}1.61 & \cellcolor[HTML]{FDCA7E}1.13 & \cellcolor[HTML]{E4E382}0.83 & \cellcolor[HTML]{DFE282}0.80 \\
\bottomrule
\end{tabular}
\end{subtable}
\begin{subtable}{0.31\textwidth}
\centering
\caption{Retrieval}
\scriptsize
\setlength{\tabcolsep}{3.2pt}
\renewcommand{\arraystretch}{1.05}
\begin{tabular}{c|ccccc}
\toprule
Acc. & 81 & 82 & 83 & 84 & 85 \\
\cmidrule(lr){1-6}
Ratio & \cellcolor[HTML]{7BC57C}0.15 & \cellcolor[HTML]{7CC57C}0.16 & \cellcolor[HTML]{7EC67D}0.17 & \cellcolor[HTML]{80C67D}0.19 & \cellcolor[HTML]{78C47C}0.14 \\
\midrule
Acc. & 86 & 87 & 88 & 89 & 90 \\
\cmidrule(lr){1-6}
Ratio & \cellcolor[HTML]{75C37C}0.12 & \cellcolor[HTML]{73C37C}0.10 & \cellcolor[HTML]{72C27C}0.10 & \cellcolor[HTML]{72C27C}0.10 & \cellcolor[HTML]{72C27C}0.10 \\
\bottomrule
\end{tabular}
\end{subtable}
\begin{subtable}{0.31\textwidth}
\centering
\caption{Text}
\scriptsize
\setlength{\tabcolsep}{3.2pt}
\renewcommand{\arraystretch}{1.05}
\begin{tabular}{c|ccccc}
\toprule
Acc. & 77 & 78 & 79 & 80 & 81 \\
\cmidrule(lr){1-6}
Ratio & \cellcolor[HTML]{B0D47F}0.49 & \cellcolor[HTML]{BED880}0.59 & \cellcolor[HTML]{AED47F}0.48 & \cellcolor[HTML]{AFD47F}0.49 & \cellcolor[HTML]{AFD47F}0.49 \\
\midrule
Acc. & 82 & 83 & 84 & 85 & 86 \\
\cmidrule(lr){1-6}
Ratio & \cellcolor[HTML]{AFD47F}0.49 & \cellcolor[HTML]{C3DA81}0.62 & \cellcolor[HTML]{C4DA81}0.62 & \cellcolor[HTML]{92CC7E}0.30 & \cellcolor[HTML]{BFD980}0.59 \\
\bottomrule
\end{tabular}
\end{subtable}

\vspace{0.8em}

\begin{subtable}{0.31\textwidth}
\centering
\caption{Image}
\scriptsize
\setlength{\tabcolsep}{3.2pt}
\renewcommand{\arraystretch}{1.05}
\begin{tabular}{c|ccccc}
\toprule
Acc. & 40 & 45 & 50 & 55 & 60 \\
\cmidrule(lr){1-6}
Ratio & \cellcolor[HTML]{7FC67D}0.18 & \cellcolor[HTML]{83C77D}0.21 & \cellcolor[HTML]{8CCA7D}0.26 & \cellcolor[HTML]{86C87D}0.23 & \cellcolor[HTML]{9BCE7E}0.36 \\
\midrule
Acc. & 65 & 70 & 75 & 80 & 85 \\
\cmidrule(lr){1-6}
Ratio & \cellcolor[HTML]{A9D27F}0.45 & \cellcolor[HTML]{ACD37F}0.47 & \cellcolor[HTML]{DAE082}0.76 & \cellcolor[HTML]{D1DE81}0.71 & \cellcolor[HTML]{BCD880}0.57 \\
\bottomrule
\end{tabular}
\end{subtable}
\begin{subtable}{0.31\textwidth}
\centering
\caption{Pathfinder}
\scriptsize
\setlength{\tabcolsep}{3.2pt}
\renewcommand{\arraystretch}{1.05}
\begin{tabular}{c|ccccc}
\toprule
Acc. & 59 & 63 & 67 & 71 & 75 \\
\cmidrule(lr){1-6}
Ratio & \cellcolor[HTML]{9ACE7E}0.35 & \cellcolor[HTML]{A1D07F}0.40 & \cellcolor[HTML]{A2D07F}0.40 & \cellcolor[HTML]{A6D17F}0.43 & \cellcolor[HTML]{B0D47F}0.49 \\
\midrule
Acc. & 79 & 83 & 87 & 91 & 95 \\
\cmidrule(lr){1-6}
Ratio & \cellcolor[HTML]{B4D580}0.52 & \cellcolor[HTML]{D9E082}0.76 & \cellcolor[HTML]{D5DF82}0.73 & \cellcolor[HTML]{CFDD81}0.69 & \cellcolor[HTML]{A6D17F}0.43 \\
\bottomrule
\end{tabular}
\end{subtable}
\begin{subtable}{0.31\textwidth}
\centering
\caption{PathX}
\scriptsize
\setlength{\tabcolsep}{3.2pt}
\renewcommand{\arraystretch}{1.05}
\begin{tabular}{c|ccccc}
\toprule
Acc. & 51 & 56 & 61 & 66 & 71 \\
\cmidrule(lr){1-6}
Ratio & \cellcolor[HTML]{7CC57C}0.16 & \cellcolor[HTML]{7CC57C}0.16 & \cellcolor[HTML]{7DC67D}0.17 & \cellcolor[HTML]{7EC67D}0.17 & \cellcolor[HTML]{7FC67D}0.18 \\
\midrule
Acc. & 76 & 81 & 86 & 91 & 96 \\
\cmidrule(lr){1-6}
Ratio & \cellcolor[HTML]{81C77D}0.19 & \cellcolor[HTML]{81C77D}0.20 & \cellcolor[HTML]{85C87D}0.22 & \cellcolor[HTML]{87C87D}0.23 & \cellcolor[HTML]{80C67D}0.19 \\
\bottomrule
\end{tabular}
\end{subtable}

\begin{center}
\begin{tikzpicture}
\begin{axis}[
    hide axis,
    scale only axis,
    height=0pt,
    width=0pt,
    colormap={greenyellowred}{%
        rgb255(0cm)=(99,190,123);
        rgb255(1cm)=(255,235,132);
        rgb255(1.5cm)=(248,105,107)
    },
    colorbar horizontal,
    point meta min=0,
    point meta max=1.5,
    colorbar style={
        width=0.55\textwidth,
        height=0.12cm,
        xtick={0,1,1.5},
        xticklabels={0,1,$\geq 1.5$},
        xlabel={subsampled time / vanilla time},
        xlabel style={font=\small},
        tick label style={font=\small},
    },
]
\addplot [draw=none] coordinates {(0,0) [0]};
\end{axis}
\end{tikzpicture}
\end{center}

\end{table}

Conceptually, our method operates entirely on the time axis, so it can be combined with state-pruning techniques for additional speedups. Compared to those methods, it has two notable differences:
\begin{enumerate}[leftmargin=*]
    \item Our method lowers the cost of the early training stages by initially using subsampled sequences, so computation is cheap before the model is exposed to the full-resolution data. As a result, it can reach a given accuracy substantially earlier in wall-clock time.
    \item Our method does not modify or compress the model itself, so it preserves the full expressive power of the original architecture. The tradeoff is that it does not reduce inference cost.
\end{enumerate}

Our method is also connected to the multi-stage framework of~\citet{chang2017multi}, which progressively increases network depth during training. The two approaches nonetheless rely on different notions of continuity:~\citet{chang2017multi} exploits continuity in model depth, whereas we exploit continuity in time for continuous sequential models. In~\Cref{fig:subsampledtraining} and~\Cref{tab:lra_time_accuracy_ratios}, we show that, for five of the six LRA tasks, the time--accuracy curve of our method consistently stays above that of standard training (see Appendix~\ref{app:sec5exp} for more details). In other words, our approach reaches \textit{any} attainable accuracy in less time. Moreover, on several tasks --- especially the more continuous ones, such as Pathfinder, PathX, and Image --- it also achieves a higher final accuracy than standard training. One possible explanation is that, for tasks with stronger continuous structure, starting from low-resolution data encourages the model to learn coarse patterns first before fitting finer-scale details at later stages. This may act as a form of implicit regularization. We leave a more precise understanding of this phenomenon to future work. In addition, our subsampling method echoes the significance of model continuity; in particular, it does not help the more discrete-in-practice models such as S6 (see~\Cref{fig:mambacifar}).

\section{Conclusion}

In this paper, we propose continuity as a principle for understanding when sequential models match the temporal structure of their tasks. Our results show that models differ in how their predictions behave under temporal refinement, tasks differ in how much continuous temporal structure their sequences contain, and performance improves when these two forms of continuity are aligned. This perspective separates continuous-time motivation from continuous behavior: although S4, S5, and S6 all arise from state-space formulations, they have different refinement behaviors once implemented as discrete models. More broadly, understanding task continuity together with the continuous inductive bias of an architecture can help predict which models should be favored in which temporal regimes. This suggests that continuity is a useful law-like organizing principle for sequential modeling. Future work should develop a broader theory of how continuity interacts with other inductive biases.

\subsubsection*{Acknowledgments}

We would like to acknowledge support from the U.S. Department of Energy, Office of Science, Office of Advanced Scientific Computing Research, EXPRESS: 2025 Exploratory Research for Extreme-Scale Science program, under Contract Number DE-AC02-05CH11231 at Berkeley Lab. The views, opinions, and/or findings expressed are those of the authors and should not be interpreted as representing the official views or policies of the Department of Defense or the U.S. Government.

\subsubsection*{Reproducibility Statement}

Code scripts that are used to produce the numerical and empirical results presented in this paper can be found at \url{https://github.com/AnnanYu/ContinuousLaws}.

\bibliographystyle{plainnat}
\bibliography{references}

\clearpage

%%%%%%%%%%%%%%%%%%%%%%%%%%%%%%%%%%%%%%%%%%%%%%%%%%%%%%%%%%%%

\appendix

\section{Related Work}\label{app:relatedwork}

\paragraph{Continuous Models.} Continuous-time neural models arose from the observation that many sequential and dynamical processes are more naturally described by latent states evolving on a continuous time axis than by fixed-step recursions. A major modern entry point was Neural ODEs~\citep{chen2018neural,kim2021stiff,poli2019graph}, which replaced a finite stack of residual layers with a learned vector field integrated by an ODE solver, thereby turning depth itself into a continuous variable. This perspective quickly led to continuous-time latent-variable models for irregularly sampled data, such as Latent ODEs and ODE-RNNs~\citep{rubanova2019latent}, where hidden states evolve continuously between observations and are updated when new data arrive. A related line of work emphasizes \emph{adaptive} or \emph{gated} continuous dynamics: GRU-ODE-Bayes imports GRU-style gating into a continuous-time evolution rule~\citep{de2019gru}, while Liquid Time-Constant networks and their closed-form variants modulate effective time constants and input responses through state- and input-dependent interactions~\citep{hasani2021liquid,hasani2022closed}. From this viewpoint, gating can be seen as a mechanism for making the underlying dynamics selective or input-adaptive, rather than purely time-invariant. Our work is conceptually adjacent to this literature, but asks a different question: when a model is presented as continuous-time, to what extent does its discrete behavior actually preserve that continuity in practice? A closely related perspective is developed in~\citet{queiruga2020continuous}, which argues that the connection between deep networks and continuous dynamics should be understood through the lens of numerical integration more carefully, rather than through a purely formal Euler-style analogy, and emphasizes invariance across different discrete realizations of the same underlying dynamics.

\paragraph{Continuity of State-space Models.} State-space models form another major class of continuous models in modern sequence learning: the original linear state-space layer and S4 explicitly start from a continuous-time system and then discretize it for efficient computation~\citep{gu2021efficiently}, while S4D~\citep{gu2021efficiently}, S5~\citep{smith2022simplified}, and S6~\citep{gu2023mamba} simplify, generalize, or make selective this same underlying state-space viewpoint. In particular, S4 builds an efficient structured parameterization of a continuous-time linear system, S4D shows that a diagonal realization can retain much of this behavior when properly initialized, S5 recasts the layer as a multi-input multi-output SSM with efficient scans, and S6/Mamba makes the discretized dynamics input-dependent through selectivity.

This continuous perspective is not limited to the canonical S4/S4D/S5 family. Classical systems theory gives a tight correspondence between state-space realizations, transfer functions, and convolutional descriptions~\citep{antoulas2005approximation}, and this viewpoint has become increasingly explicit in machine learning:~\citet{gu2021combining} frames sequence layers as recurrent, convolutional, and continuous-time models simultaneously;~\citet{yu2024hope} develops a different continuous-time parameterization through Hankel operators; and even seemingly more ``discrete'' sequence models such as linear recurrent units~\citep{orvieto2023resurrecting} and smooth long convolutions~\citep{fu2023simple} can often be interpreted through equivalent linear dynamical or transfer-function descriptions~\citep{parnichkun2024state,yu2023robustifying}.

A growing line of work leverages this continuity directly rather than treating it as background intuition. For example,~\citet{gu2022train} gives a more explicit continuous interpretation of HiPPO-style initialization;~\citet{liu2024autocorrelation} studies initialization through temporal structure; and~\citet{yu2024tuning} analyzes how continuous-time transfer functions induce frequency preferences and how those preferences can be modified. On the theoretical side,~\citet{wang2023stablessm} studies long-term memory in SSMs from the viewpoint of parameterization, showing that stable reparameterizations can preserve expressive long-range dynamics while improving optimization stability, and~\citet{muca2024theoretical} proposes a continuous-time-inspired framework for understanding selective SSMs via controlled dynamical systems and rough-path tools.

This continuity has also been used in applications, especially when the sampling rate may vary across training and deployment. In event-based vision, S4/S5-style models have been used precisely because their learnable time-scale parameter allows inference at frequencies different from those seen during training~\citep{zubic2024state}; in time-series forecasting,~\citet{graf2025flowstate} uses an S5-based encoder together with continuous-time scaling to target sampling-rate invariance; and in scientific machine learning, deep SSMs have recently been shown to outperform Transformers on rainfall--runoff simulation~\citep{wang2025deep}. These works are complementary to ours: they exploit the continuous formulation of SSMs in practice, whereas our focus is to ask when that continuous interpretation is actually faithful at the level of the induced discrete model.

\paragraph{Inductive Biases of Sequential Models.} Inductive bias is a foundational concept in machine learning: in the classical view, some bias is necessary for any learner to generalize beyond the training data, and modern deep learning can often be understood as the interaction between architectural, optimization, and data-induced biases~\citep{mitchell1980need}. In general neural networks, one of the most studied examples is {frequency bias} or {spectral bias}, namely the tendency to learn lower-frequency components earlier or more easily than higher-frequency ones~\citep{rahaman2019spectral,yu2022tuning}.

For sequential models, inductive biases are especially central because sequence architectures differ sharply in how they encode time, locality, memory, and scale. A large body of work studies such biases either explicitly or implicitly~\citep{yu2025understanding,ebrahimi2026induction}. Among the most common examples are \emph{recency bias}, where recent tokens or states receive disproportionate weight~\citep{hegazyrecency}; \emph{frequency bias}, where models prefer smoother or lower-frequency temporal structure~\citep{yu2024tuning}; \emph{locality bias}, where nearby tokens interact more strongly than distant ones~\citep{beltagy2020longformer}; \emph{multi-scale or hierarchical bias}, where architectures favor representations that decompose across resolutions~\citep{chung2016hierarchical}; \emph{position or order bias}, induced by causal masking or positional encodings~\citep{wu2025emergence}; and \emph{state-tracking or finite-memory bias}, which governs how information is preserved across long horizons~\citep{wang2023stablessm}. These biases appear in different forms across convolutions, recurrent models, Transformers, and SSMs, and they often interact rather than acting in isolation.

\paragraph{Compression of State-Space Models.}
The idea of compressing an SSM naturally connects to the classical literature on model order reduction (MOR), which seeks lower-order dynamical systems that preserve the input--output behavior of a higher-order model~\citep{antoulas2005approximation}. Canonical tools in this area include balanced truncation~\citep{gugercin2004survey} and Hankel-norm approximation~\citep{glover1984all,yu2024leveraging}, which come with strong approximation guarantees. A recurring tension, however, is that many of these methods rely on similarity transforms or dense reduced realizations that can destroy the diagonal or structured parameterizations exploited by modern SSMs for efficiency and stability. This challenge is already emphasized in recent SSM pruning work, which positions post-training compression as a form of MOR adapted to deep stacks and structured state spaces.

Recent work has therefore focused on {structure-aware} compression schemes for learned SSMs. LAST introduces layer-adaptive state pruning based on per-state $\mathcal{H}_\infty$ scores and a global cross-layer ranking, showing that trained SSMs often contain substantial removable redundancy~\citep{gwak2024layer}. AIRE-Prune replaces this worst-case perspective with an asymptotic impulse-response energy criterion, again in a post-training setting, and reports strong compression with little accuracy loss~\citep{padhy2026aire}.

A complementary direction aims to compress SSMs {during} optimization rather than after training. For example,~\citet{chahine2025curious} uses Hankel-singular-value ideas to begin with a larger linear SSM and then reduce its dimension over the course of training, improving the tradeoff between expressivity and efficiency. Other orthogonal directions reduce cost by changing the inference mechanism rather than pruning states directly, for example, via transfer-function or state-free inference formulations~\citep{parnichkun2024state}. Our approach is orthogonal to these lines of work: instead of reducing the state dimension, we compress the {time axis} during training, and can therefore be combined with state-pruning or in-training reduction methods.

\clearpage

\section{Proof of Theoretical Results}\label{app:proofs}

In this appendix, we present the proofs of our theoretical claims that S4 is a more continuous class of models than S6. To begin with, we prove~\Cref{lem.discretize} about a general continuous system, and the rest of the claims follow easily.

To prove~\Cref{lem.discretize}, we leverage a key property of the ZOH discretization: if the function $u(t)$ is constant on each interval between the sampling points, then the ZOH discretization does not induce any error. In our assumption, however, $u(t)$ is obviously not piecewise constant, but we can approximate $u(t)$ by a piecewise-constant function $v(t)$. Now, by a triangle inequality, the difference between the continuous output on $u(t)$ and the discrete output on $u(\tau k)$ is upper-bounded by the sum of three terms: 
\begin{itemize}[leftmargin=*]
    \item the difference between the continuous output on $u(t)$ and the continuous output on $v(t)$,
    \item that between the continuous output on $v(t)$ and the discrete output on $v(\tau k)$,
    \item and that between the discrete output on $u(\tau k)$ and the discrete output on $v(\tau k)$.
\end{itemize}
By defining $v$ so that $u(\tau k) = v(\tau k)$, the last term always vanishes. Moreover, for the ZOH discretization, the second term is also zero. It therefore remains to control the first term, which reduces to controlling the growth of a system's states and output given a zero initial condition and a small input $u(t) - v(t)$. Hence, it naturally fits into a Gr\"onwall-type analysis. We now formalize this argument.

\begin{proof}[Proof of~\Cref{lem.discretize}]
    Fix a $\tau \in (0,1)$, define a step function $v^\tau: [0,1] \rightarrow \R$ by
\[
	v^\tau(t) = u(\tau \lfloor t / \tau \rfloor), \qquad t \in [0,1].
\]
Since $u(t)$ is $L_u$-Lipschitz continuous, we have
\begin{equation}\label{eq.uvdist}
	|u(t) - v^\tau(t)| \leq L_u \tau, \qquad 0 \leq t \leq 1.
\end{equation}
Define $\boldsymbol{\xi}^\tau(t)$ and $z^\tau(t)$ to be the solution of the continuous system on input $v^\tau$, i.e.,
\begin{equation}\label{eq.stepLTI}
    	\begin{aligned}
		(\boldsymbol{\xi}^\tau)'(t) &= \Delta(v^\tau(t)) (\mathbf{A} \boldsymbol{\xi}^\tau(t) + \mathbf{B}(v^\tau(t)) v^\tau(t)), \\
		z^\tau(t) &= \mathbf{C}(v^\tau(t))^\top \boldsymbol{\xi}^\tau(t) + D v^\tau(t),
	\end{aligned} \qquad\qquad \boldsymbol{\xi}^\tau(0) = \boldsymbol{0}.
\end{equation}
In what follows, we break the proof into two parts, corresponding to the case where ZOH or bilinear discretization is used, respectively.

\paragraph{CASE I: ZOH Discretization.}
Define $\mathbf{z}^\tau = (z_0^\tau, \ldots, z_{\lfloor 1 / \tau \rfloor}^\tau)$ be the solution of the discrete system
\[
	\begin{aligned}
		\boldsymbol{\xi}^\tau_{k+1} &= \overline{\mathbf{A}}^\tau_k \boldsymbol{\xi}^\tau_{k} + \overline{\mathbf{B}}^\tau_k v(\tau k), \\
		z_k^\tau &= (\overline{\mathbf{C}}^\tau_k)^\top \boldsymbol{\xi}_{k}^\tau + \overline{D} v(\tau k),
	\end{aligned} \qquad\qquad \boldsymbol{\xi}^\tau_0 = \boldsymbol{0},
\]
where $\overline{\mathbf{A}}^\tau_k$, $\overline{\mathbf{B}}^\tau_k$, $\overline{\mathbf{C}}^\tau_k$, and $\overline{{D}}$ are defined by the ZOH formula in~\cref{eq.discretizeLTI}. Note that since we have $v(\tau k) = u(\tau k)$ for every $k = 0, \ldots, \lfloor 1 / \tau \rfloor$, we have
\begin{equation}\label{eq.yandz}
	z_k^\tau = y_k^\tau, \qquad 1 \leq k \leq \lfloor 1 / \tau \rfloor.
\end{equation}
Moreover, the discrete system defined by the ZOH formula in~\cref{eq.discretizeLTI} also forms a ZOH discretization of the system in~\cref{eq.stepLTI}. Since $v^\tau$ is a step function with a step size of $\tau$, by definition, we have
\begin{equation}\label{eq.ZOHv}
	z_k^\tau = z^\tau(\tau k), \qquad 1 \leq k \leq \lfloor 1 / \tau \rfloor.
\end{equation}
Next, define
\[
	\boldsymbol{\varepsilon}^\tau = \boldsymbol{\xi}^\tau - \mathbf{x} : [0,1] \rightarrow \R.
\]
We have
\begin{equation*}
	\begin{aligned}
	(\boldsymbol{\varepsilon}^\tau)'(t) &= \Delta(v^\tau(t)) (\mathbf{A} \boldsymbol{\xi}^\tau(t) + \mathbf{B}(v^\tau(t)) v^\tau(t)) - \Delta(u(t)) (\mathbf{A} \mathbf{x}(t) + \mathbf{B}(u(t)) u(t)) \\
	&= \Delta(v^\tau(t)) (\mathbf{A} \boldsymbol{\xi}^\tau(t) + \mathbf{B}(v^\tau(t)) v^\tau(t)) \\
        &\qquad - (\Delta(v^\tau(t)) + (\Delta(u(t)) - \Delta(v^\tau(t)))) (\mathbf{A} \mathbf{x}(t) + \mathbf{B}(u(t)) u(t)) \\
	&= \underbrace{\Delta(v^\tau(t)) \mathbf{A} \boldsymbol{\varepsilon}^\tau(t)}_{\boldsymbol{\varepsilon}^\tau_1(t)} + \underbrace{\Delta(v^\tau(t)) (\mathbf{B}(v^\tau(t)) v^\tau(t) - \mathbf{B}(u(t)) u(t))}_{\boldsymbol{\varepsilon}^\tau_2(t)} \\
	&\qquad + \underbrace{(\Delta(u(t)) - \Delta(v^\tau(t))) \mathbf{A}\mathbf{x}(t)}_{\boldsymbol{\varepsilon}^\tau_3(t)} + \underbrace{(\Delta(u(t)) - \Delta(v^\tau(t))) \mathbf{B}(u(t)) u(t)}_{\boldsymbol{\varepsilon}^\tau_4(t)}
	\end{aligned}
\end{equation*}
We then control these four terms separately. First, we have
\begin{equation}\label{eq.E1}
	\|\boldsymbol{\varepsilon}^\tau_1(t)\|_2 \leq |\Delta(v^\tau(t))| \|\mathbf{A}\|_2 \|\boldsymbol{\varepsilon}^\tau(t)\|_2 \leq M_\Delta \|\mathbf{A}\|_2 \|\boldsymbol{\varepsilon}^\tau(t)\|_2, \qquad 0 \leq t \leq 1,
\end{equation}
where we control $|\Delta(v^\tau(t))|$ using the fact that $\text{Range}(v) \subset \text{Range}(u)$. Next, we have
\begin{equation}\label{eq.E2}
\begin{aligned}
	\|\boldsymbol{\varepsilon}^\tau_2(t)\|_2 &\leq M_\Delta (\|\mathbf{B}(v^\tau(t)) - \mathbf{B}(u(t))\|_2 |u(t)| + \|\mathbf{B}(u^\tau(t))\|_2 |v^\tau(t) - u(t)|), \\
	&= M_\Delta(L_B L_u M_u \tau + M_B L_u \tau), \\
	&= M_\Delta (L_B M_u + M_B) L_u \tau,  \qquad 0 \leq t \leq 1.
\end{aligned}
\end{equation}
From
\[
\mathbf x'(t)=\Delta(u(t))(\mathbf A\mathbf x(t)+\mathbf B(u(t))u(t)),\qquad \mathbf x(0)=0,
\]
we obtain
\[
\|\mathbf x'(t)\|_2
\le
M_\Delta\|\mathbf A\|_2\|\mathbf x(t)\|_2 + M_\Delta M_B M_u.
\]
Hence, by Gr\"onwall's inequality,
\[
\|\mathbf x(t)\|_2
\le
\frac{M_B M_u}{\|\mathbf A\|_2}\bigl(e^{M_\Delta\|\mathbf A\|_2}-1\bigr),
\qquad 0\le t\le 1.
\]
Therefore, the third can be bounded by
\begin{equation}\label{eq.E3}
\begin{aligned}
	\|\boldsymbol{\varepsilon}^\tau_3(t)\|_2 &\leq L_\Delta L_u \tau \|\mathbf{A}\|_2 \|\mathbf{x}(t)\|_2 \leq L_\Delta L_u \tau M_B M_u \left(e^{M_\Delta \|\mathbf{A}\|_2} - 1\right),  \qquad 0 \leq t \leq 1.
\end{aligned}
\end{equation}
The final term satisfies that
\begin{equation}\label{eq.E4}
\begin{aligned}
	\|\boldsymbol{\varepsilon}^\tau_4(t)\|_2 &\leq L_\Delta L_u \tau M_B M_u,  \qquad 0 \leq t \leq 1.
\end{aligned}
\end{equation}
Now, combining~\cref{eq.E1}-\cref{eq.E4}, for every $0 \leq t \leq 1$, we have
\begin{equation*}
	\|(\boldsymbol{\varepsilon}^\tau)'(t)\|_2 \leq M_\Delta \|\mathbf{A}\|_2 \|\boldsymbol{\varepsilon}^\tau(t)\|_2 + \left[M_\Delta (L_B M_u + M_B) L_u + L_\Delta L_u M_B M_u e^{M_\Delta \|A\|_2}\right]\tau.
\end{equation*}
Since $\boldsymbol{\varepsilon}^\tau(0) = \boldsymbol{0}$, by Gr\"onwall's inequality again, we have
\[
	\|\boldsymbol{\varepsilon}^\tau(t)\|_2 \leq \frac{M_\Delta (L_B M_u + M_B) L_u + L_\Delta L_u M_B M_u e^{M_\Delta \|\mathbf{A}\|_2}}{M_\Delta \|\mathbf{A}\|_2} \; (\text{exp}(M_\Delta \|\mathbf{A}\|_2 t) - 1) \tau.
\]
Hence, for every $0 \leq k \leq \lfloor 1 / \tau \rfloor$, by H\"older's inequality, we have
\begin{equation}\label{eq.contyandz}
\begin{aligned}
	|z^\tau(\tau k) - y(\tau k)| &\leq \|\mathbf{C}(u(k\tau))\|_2 \|\boldsymbol{\varepsilon}^\tau(k\tau)\|_2 \\
	&\!\!\!\!\!\leq M_C \frac{M_\Delta (L_B M_u + M_B) L_u + L_\Delta L_u M_B M_u e^{M_\Delta \|\mathbf{A}\|_2}}{M_\Delta \|\mathbf{A}\|_2} \; (\text{exp}(M_\Delta \|\mathbf{A}\|_2) - 1) \tau.
\end{aligned}
\end{equation}
Combining~\cref{eq.yandz},~\cref{eq.ZOHv}, and~\cref{eq.contyandz}, for every $0 \leq k \leq \lfloor 1 / \tau \rfloor$, we have
\[
	\begin{aligned}
		|y^\tau_k - y(\tau k)| &\leq |y^\tau_k - z^\tau_k| + |z^\tau_k - z^\tau(\tau k)| + |z^\tau(\tau k)- y(\tau k)| \\
		&\!\!\!\!\!\!\!\!\!\!\!\!\!\leq 0 + 0 + M_C \frac{M_\Delta (L_B M_u + M_B) L_u + L_\Delta L_u M_B M_u e^{M_\Delta \|\mathbf{A}\|_2}}{M_\Delta \|\mathbf{A}\|_2} \; (\text{exp}(M_\Delta \|\mathbf{A}\|_2) - 1) \tau.
	\end{aligned}
\]
This proves the result for the ZOH discretization.

\paragraph{CASE II: Bilinear Discretization.} Define also the discrete trajectory $\mathbf{z}^\tau = (z_0^\tau,\dots,z_{\lfloor 1/\tau\rfloor}^\tau)$ by the bilinear recursion
\[
	\begin{aligned}
		\boldsymbol{\xi}_{k+1}^\tau &= \overline{\mathbf{A}}_k^\tau \boldsymbol{\xi}_{k}^\tau + \overline{\mathbf{B}}_k^\tau v^\tau(\tau k), \\
		z_k^\tau &= (\overline{\mathbf{C}}_k^\tau)^\top \boldsymbol{\xi}_{k}^\tau + \overline{D}\, v^\tau(\tau k),
	\end{aligned}
	\qquad\qquad \boldsymbol{\xi}_0^\tau = \boldsymbol{0},
\]
where $\overline{\mathbf{A}}_k^\tau,\overline{\mathbf{B}}_k^\tau,\overline{\mathbf{C}}_k^\tau,\overline{D}$ are now defined by the bilinear formula in~\cref{eq.discretizeLTI}. Since $v^\tau(\tau k)=u(\tau k)$ for every $k$, we again have
\begin{equation}\label{eq.yandz-bilinear}
	z_k^\tau = y_k^\tau, \qquad 0 \leq k \leq \lfloor 1/\tau \rfloor.
\end{equation}
Unlike the ZOH case, the bilinear discretization is not exact on the step input $v^\tau$, so in general $z_k^\tau \neq z(\tau k)$. We therefore decompose
\begin{equation}\label{eq.bilinear-split}
	|y_k^\tau - y(\tau k)|
	\leq
	|y_k^\tau - z^\tau_k|
	+
	|z^\tau_k - z(\tau k)|
	+
	|z(\tau k) - y(\tau k)|.
\end{equation}
The first term vanishes by~\cref{eq.yandz-bilinear}. The second term follows the same argument in the ZOH case and satisfies that
\begin{equation}\label{eq.bilinear-cont-part}
\begin{aligned}
	|\widehat{\mathbf{z}}(\tau k) - y(\tau k)|
	\leq
	M_C
	\frac{M_\Delta (L_B M_u + M_B)L_u + L_\Delta L_u M_B M_u e^{M_\Delta \|\mathbf{A}\|_2}}{M_\Delta \|\mathbf{A}\|_2}
	\left(e^{M_\Delta \|\mathbf{A}\|_2}-1\right)\tau.
\end{aligned}
\end{equation}
It remains to control $|z_k^\tau - z(\tau k)|$. Since $v^\tau$ is constant on each interval $[\tau k,\tau(k+1))$, the continuous dynamics on that interval reduce to the linear ODE
\[
	(\boldsymbol{\xi}^\tau)'(t)
	=
	\Delta(u(\tau k))
	\bigl(\mathbf{A}\boldsymbol{\xi}^\tau(t)+\mathbf{B}(u(\tau k))u(\tau k)\bigr),
\]
that is, to a constant-coefficient affine system
\[
	(\boldsymbol{\xi}^\tau)'(t)=\mathbf{M}_k \boldsymbol{\xi}^\tau(t)+\mathbf{g}_k,
\]
with
\[
	\mathbf{M}_k:=\Delta(u(\tau k))\mathbf{A},
	\qquad
	\mathbf{g}_k:=\Delta(u(\tau k))\mathbf{B}(u(\tau k))u(\tau k).
\]
The bilinear update is precisely the trapezoidal-rule discretization of this ODE. Since the exact solution of a constant-coefficient linear ODE is smooth, the trapezoidal rule has local truncation error of order $\mathcal O(\tau^3)$ on each step, uniformly in $k$. Moreover, the one-step maps are uniformly stable for sufficiently small $\tau$, i.e.,
\[
	\left\|\left(\mathbf I-\frac{\tau}{2}\mathbf M_k\right)^{-1}
	\left(\mathbf I+\frac{\tau}{2}\mathbf M_k\right)\right\|_2
	\le 1 + C\tau, \qquad 0 \leq k \leq \lfloor 1 / \tau \rfloor,
\]
for some constant $C>0$ independent of $k$.
Therefore, by the standard global error estimate for the trapezoidal rule, there exists a constant $C' > 0$, independent of $\tau$ and $k$, such that
\begin{equation}\label{eq.bilinear-disc-error}
	\|\boldsymbol{\xi}_k^\tau-\boldsymbol{\xi}^\tau(\tau k)\|_2
	\le
	C'\tau^2,
	\qquad
	0\le k\le \lfloor 1/\tau\rfloor.
\end{equation}
Consequently,
\begin{equation}\label{eq.bilinear-output-disc-error}
\begin{aligned}
	|z_k^\tau-z(\tau k)|
	&=
	|(\mathbf C(u(\tau k)))^\top(\boldsymbol{\xi}_k^\tau-\boldsymbol{\xi}^\tau(\tau k))| \leq
	M_C \|\boldsymbol{\xi}_k^\tau-\boldsymbol{\xi}^\tau(\tau k)\|_2 \leq
	M_C C' \tau^2.
\end{aligned}
\end{equation}
Combining~\cref{eq.bilinear-split,eq.bilinear-cont-part,eq.bilinear-output-disc-error}, for every $0 \leq k \leq \lfloor 1/\tau \rfloor$, we obtain
\[
\begin{aligned}
	|y_k^\tau \!-\! y(\tau k)| \leq
	M_C
	\frac{M_\Delta (L_B M_u + M_B)L_u \!+\! L_\Delta L_u M_B M_u e^{M_\Delta \|\mathbf{A}\|_2}}{M_\Delta \|\mathbf{A}\|_2}
	\left(e^{M_\Delta \|\mathbf{A}\|_2}\!-\!1\right)\tau \!+\! \mathcal{O}(\tau^2).
\end{aligned}
\]
This proves the desired estimate.
\end{proof}

Now that we have proved~\Cref{lem.discretize}, showing the corollaries on S4 and S6 is a simple task to plug in the maximum moduli and the Lipschitz constants into the lemma. This is particularly simple for S4 because the parameters are input-independent, making the Lipschitz constants $L_B = L_C = L_\Delta = 0$. For S6, the Lipschitz constants can be large, making the discretization errors usually larger.

\begin{proof}[Proof of~\Cref{cor:S4discrete}]
	Since $\mathbf{B}, \mathbf{C}$, and $\Delta$ are constant, we have $L_B = L_C = L_\Delta = 0$. Thus, plugging in the expressions gives us
	\begin{align*}
		&M_C \frac{M_\Delta (L_B M_u + M_B) L_u + L_\Delta L_u M_B M_u e^{M_\Delta \|\mathbf{A}\|_2}}{M_\Delta \|\mathbf{A}\|_2} \; (\text{exp}(M_\Delta \|\mathbf{A}\|_2) - 1) \\
		&\qquad= \frac{\|\mathbf{C}\|_2 \|\mathbf{B}\|_2 \Delta L_u (\text{exp}(\Delta \|\mathbf{A}\|_2) - 1)}{\Delta \|\mathbf{A}\|_2} = \frac{\|\mathbf{C}\|_2 \|\mathbf{B}\|_2 L_u (\text{exp}(\Delta \|\mathbf{A}\|_2) - 1)}{\|\mathbf{A}\|_2}.
	\end{align*}
	This proves the corollary.
\end{proof}

\begin{proof}[Proof of~\Cref{cor:S6discrete}]
    By definition, we have
\[
	L_B = \|\mathbf{W}_\mathbf{B}\|_2, \qquad L_C = \|\mathbf{W}_\mathbf{C}\|_2, \qquad L_\Delta = |w_\Delta|.
\]
Moreover, computing the maximum moduli of $\mathbf{B}, \mathbf{C}$, and $\Delta$ gives us
\[
	M_B = \|\mathbf{W}_\mathbf{B}\|_2 M_u, \qquad M_C = \|\mathbf{W}_\mathbf{C}\|_2 M_u, \qquad M_\Delta = \text{softplus}(|w_\Delta| M_u + b_\Delta).
\]
Plugging these expressions into~\Cref{lem.discretize}, we obtain
\begin{align*}
& M_C \frac{M_\Delta (L_B M_u + M_B) L_u + L_\Delta L_u M_B M_u e^{M_\Delta \|\mathbf{A}\|_2}}{M_\Delta \|\mathbf{A}\|_2}
\left(e^{M_\Delta \|\mathbf{A}\|_2} - 1\right) \\
&\le
\|\mathbf W_{\mathbf C}\|_2 M_u
\frac{
M_\Delta(\|\mathbf W_{\mathbf B}\|_2 M_u+\|\mathbf W_{\mathbf B}\|_2 M_u)L_u
+
|w_\Delta|L_u\,\|\mathbf W_{\mathbf B}\|_2 M_u^2 e^{M_\Delta\|\mathbf A\|_2}
}{
M_\Delta\|\mathbf A\|_2
}
\left(e^{M_\Delta\|\mathbf A\|_2}-1\right) \\
&=
\frac{
\|\mathbf W_{\mathbf C}\|_2 \|\mathbf W_{\mathbf B}\|_2 M_u^2 L_u
\left(2M_\Delta + |w_\Delta|M_u e^{M_\Delta\|\mathbf A\|_2}\right)
}{
M_\Delta\|\mathbf A\|_2
}
\left(e^{M_\Delta\|\mathbf A\|_2}-1\right).
\end{align*}
Finally, substituting
\[
M_\Delta=\text{softplus}(|w_\Delta|M_u+b_\Delta)
\]
yields the claimed estimate.
\end{proof}

\clearpage

\section{Details of Experimemts in~\Cref{sec:contmodel}}\label{app:sec3exp}

In this appendix, we present the details of the experiments in the main text that aim to understand the convergence of S4 and S6's discrete outputs to their underlying continuous outputs, respectively. There are two experiments: a numerical one and a synthetic training experiment on dynamical systems. We explain them separately.

\subsection{The numerical experiment}

In the experiment demonstrated in~\Cref{fig:numericaldiscretize}, we consider single-input, single-output dynamical systems. We randomly sample $20$ system--input pairs as follows. First, we generate the input by forming a random linear combination of the first $20$ Chebyshev polynomials on $[0,1]$:
\[
    u(t) = \sum_{j=1}^{20} C_j T_j(t), \qquad C_j \sim \mathcal{N}(0,1), \qquad 1 \leq j \leq 20,
\]
where $T_j$ is the $j$-th order Chebyshev polynomial basis function. To generate a random system, we let $\mathbf{A}$ be the diagonal HiPPO-LegS matrix used in many SSMs (see~\citep{gu2022parameterization}) and randomly sample
\[
    \mathbf{B} \sim \mathcal{N}(\boldsymbol{0}, \mathbf{I}_n), \qquad \mathbf{C} \sim \mathcal{N}(\boldsymbol{0}, \mathbf{I}_n),
\]
where $n = 8$ is the number of hidden states. Since the $D$ term does not induce a discretization error, we set it to zero. The matrices $\mathbf{B}$ and $\mathbf{C}$ are then used to form an S4 and an S6 system: for S4, they are directly used as the parameters of the LTI system; for S6, the matrices $\mathbf{B}$ and $\mathbf{C}$ are the parameters that define the projection maps $\mathbf{B}(\cdot)$ and $\mathbf{C}(\cdot)$. For S6, we additionally define the weights and biases in the gating term $\Delta(u)$:
\[
    w_\Delta \sim \mathcal{N}(0,1), \qquad b_\Delta = \text{softplus}^{-1}(0.01),
\]
and for S4, we define the constant sampling interval to be $\Delta = 0.01$, which is consistent with $b_\Delta$ used in S6. To obtain an accurate benchmark for the continuous-level output, we set $\tau_0 = 2^{-14}$ and use the fourth-order Runge--Kutta method to simulate the S4 and S6 systems. We then simulate the two systems using $\tau$ ranging from $2^{-10}$ to $2^{-2}$ and compute the $L_\infty$ errors. For a new scaling factor, we multiply it to the input function $u(t)$ and repeat the experiment. The experiments are repeated for $20$ different input $u(t)$ and S4/S6 systems. All numerical simulations are carried out using MATLAB R2025b.

\subsection{Experiment on dynamical systems}

In the next experiment that we present in~\Cref{fig:DSresample}, we consider four standard parameterized dynamical systems. In each case, the model receives a one-dimensional observed time series and predicts the underlying system parameters. We first explain these four classes of dynamical systems and then discuss the experimental setup.

\paragraph{Van der Pol oscillator.}
The Van der Pol oscillator is a classical nonlinear system originally introduced to model self-sustained oscillations in vacuum-tube circuits. For $\mu>0$, trajectories converge to a stable limit cycle; small $\mu$ yields near-sinusoidal oscillations, while large $\mu$ produces relaxation oscillations with pronounced slow--fast structure. The dynamics are
\begin{equation*}
    \ddot{x} - \mu(1-x^2)\dot{x} + x = 0,
\end{equation*}
or equivalently, in first-order form with state $(x,v)$,
\begin{equation*}
    \dot{x}=v, \qquad \dot{v}=\mu(1-x^2)v - x.
\end{equation*}
In our setting, the model observes $x(t)$ and predicts the scalar nonlinearity parameter $\mu$.

\paragraph{Damped harmonic oscillator.}
The damped harmonic oscillator is a standard linear model for oscillatory motion with energy loss, appearing in mechanical vibrations and electrical RLC circuits. Its behavior is governed by the natural frequency $\omega$ and damping ratio $\zeta$, which determine oscillation frequency and decay rate. The dynamics are
\begin{equation*}
    \ddot{x} + 2\zeta\omega\,\dot{x} + \omega^2 x = 0,
\end{equation*}
or in first-order form,
\begin{equation*}
    \dot{x}=v, \qquad \dot{v}=-2\zeta\omega\,v - \omega^2 x.
\end{equation*}
In our setting, the model observes $x(t)$ and predicts the parameters $(\omega,\zeta)$.

\paragraph{Ornstein--Uhlenbeck (OU) process.}
The OU process is a canonical continuous-time stochastic process with mean-reverting behavior, widely used in statistical physics, e.g., Langevin dynamics, and finance, e.g., interest-rate models. Unlike deterministic systems, OU trajectories exhibit intrinsic randomness due to process noise. The dynamics are given by the SDE
\begin{equation*}
    dX_t = -\theta X_t\,dt + \sigma\,dW_t,
\end{equation*}
where $W_t$ is standard Brownian motion, $\theta>0$ controls the mean-reversion rate, and $\sigma\ge 0$ controls the diffusion strength.
In our setting, the model observes $X_t$ and predicts $(\theta,\sigma)$.

\paragraph{Forced Duffing oscillator.}
The forced Duffing oscillator is a prototypical nonlinear driven system exhibiting multistability and, for suitable parameters, chaos. It is commonly used as a benchmark for nonlinear system identification because small parameter changes can substantially alter long-term behavior, especially in chaotic regimes. The dynamics are
\begin{equation*}
    \ddot{x} + \delta \dot{x} + \alpha x + \beta x^3 = \gamma\cos(\omega t),
\end{equation*}
or equivalently,
\begin{equation*}
    \dot{x}=v, \qquad \dot{v}=-\delta v - \alpha x - \beta x^3 + \gamma\cos(\omega t).
\end{equation*}
In our setting, the model observes $x(t)$ and predicts the scalar forcing amplitude $\gamma$, while holding $(\delta,\alpha,\beta,\omega)$ fixed.

For each class of dynamical systems, we generate 10,000 random training trajectories using a double-precision Runge--Kutta integrator with a sampling interval of $\tau = 0.01$. Note that even if for some training tasks, one only has one or two undetermined parameters, we also randomly sample the zeroth-order and first-order initial conditions, making the training trajectories heterogeneous enough. We then generate 1,000 random validation trajectories. For each training task, we use a one-layer model with a hidden dimension of $d = 64$ and a latent-state dimension of $n = 32$. All models are trained for $10$ epochs.

\clearpage

\section{Details of Experimemts in~\Cref{sec:contdata}}\label{app:sec4exp}

In this appendix, we present the details of the experiments in the main text that aim to understand the relationship between the continuity of a model and the continuity of a dataset.

For the synthetic dynamical systems dataset, we combine the four classes of dynamical systems discussed in~\cref{sec:contmodel}. The goal, however, is to predict the next sample in the trajectory, as opposed to unraveling the coefficients in the dynamical systems. The model returns its prediction of the embedded token, instead of the raw data. That is, given a sampled sequence
\[
    u(0), u(\tau), \ldots, u(\tau L)
\]
and its embedding
\[ 
    \mathbf{U}_\eta(0), \mathbf{U}_\eta(\tau), \ldots, \mathbf{U}_\eta(\tau L),
\]
the model outputs its prediction of $\mathbf{U}_\eta(\tau (L+1))$ instead of $u(\tau (L+1))$. We made this decision because quantization inevitably takes away some precision from the input data, making a high-precision prediction of $u(\tau (L+1))$ intrinsically harder than that of $\mathbf{U}_\eta(\tau (L+1))$.

We train an S4, an S6, and a Transformer for $10$ epochs. For every model, we set the hidden dimension to be $d = 128$ and the number of layers to be $4$. In addition, the Transformer has $8$ heads and the two SSMs have a state-space dimension of $n = 32$.

\begin{figure}[!htb]
    \centering
    \includegraphics[width=0.9\linewidth]{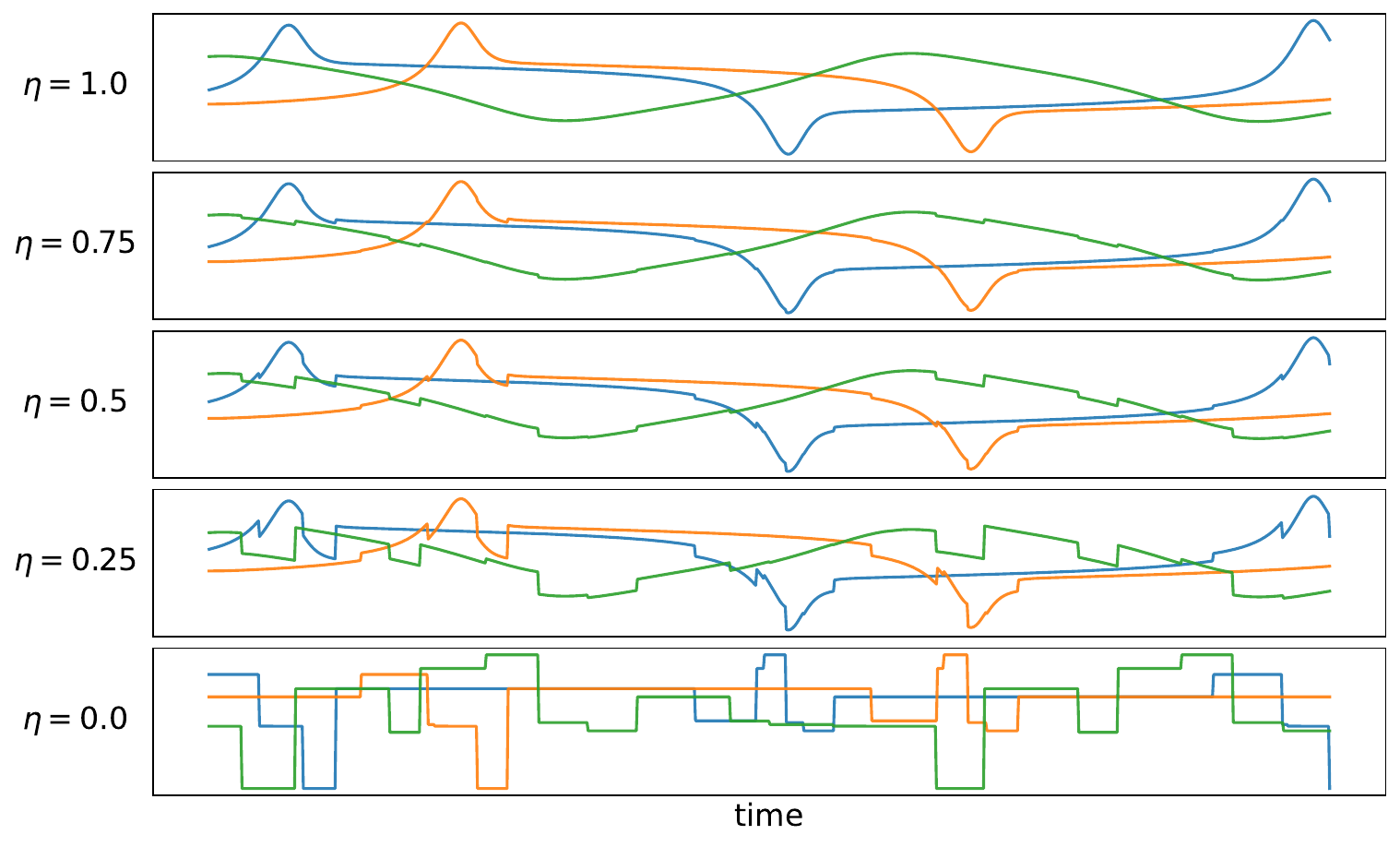}
    \caption{Visualization of three trajectories from a Van der Pol oscillator. Given a different $\eta$, the input is obtained by a weighted combination of a continuously embedded trajectory and a discretely embedded trajectory. As $\eta$ goes from $1$ to $0$, the level of the continuity of the task visually decreases.}
    \label{fig:trajectories}
\end{figure}

In~\Cref{fig:trajectories}, we present a visualization of three embedded trajectories $\mathbf{U}_\eta(t)$ from a Van der Pol oscillator, as $\eta$ ranges from $1$ to $0$. The visualization is done on the first channel of the $16$-dimensional input data $\mathbf{U}_\eta(t)$. Clearly, one can also visualize the data by performing a principal component analysis (PCA); however, since each $\mathbf{U}_\eta(t)$ is a sampled column of a fixed orthogonal matrix when $\eta = 0$ and $\mathbf{U}_\eta(\cdot)$ is a rank-one quasimatrix when $\eta = 1$, a PCA would help in neither cases.

Looking at the two metrics $\eta$ and $\mu$ that we adopt in~\cref{sec:contdata}, you may also wonder: are they consistent with each other? Clearly, given any general dataset, one can compute a $\mu$ but not an $\eta$, but for the dynamical systems dataset, one can still compute the $\mu$-metric to see if $\mu$ indeed positively correlates to $\eta$. In~\Cref{fig:eta_mu}, we present this analysis. We can see that for each lag parameter $t$, $\eta$ and $\mu_t$ are positively correlated. This further shows that our proposed $\mu$-metric is consistent with our intuition in the continuity of a dataset.

\begin{figure}[!htb]
    \centering
    \includegraphics[width=0.9\linewidth]{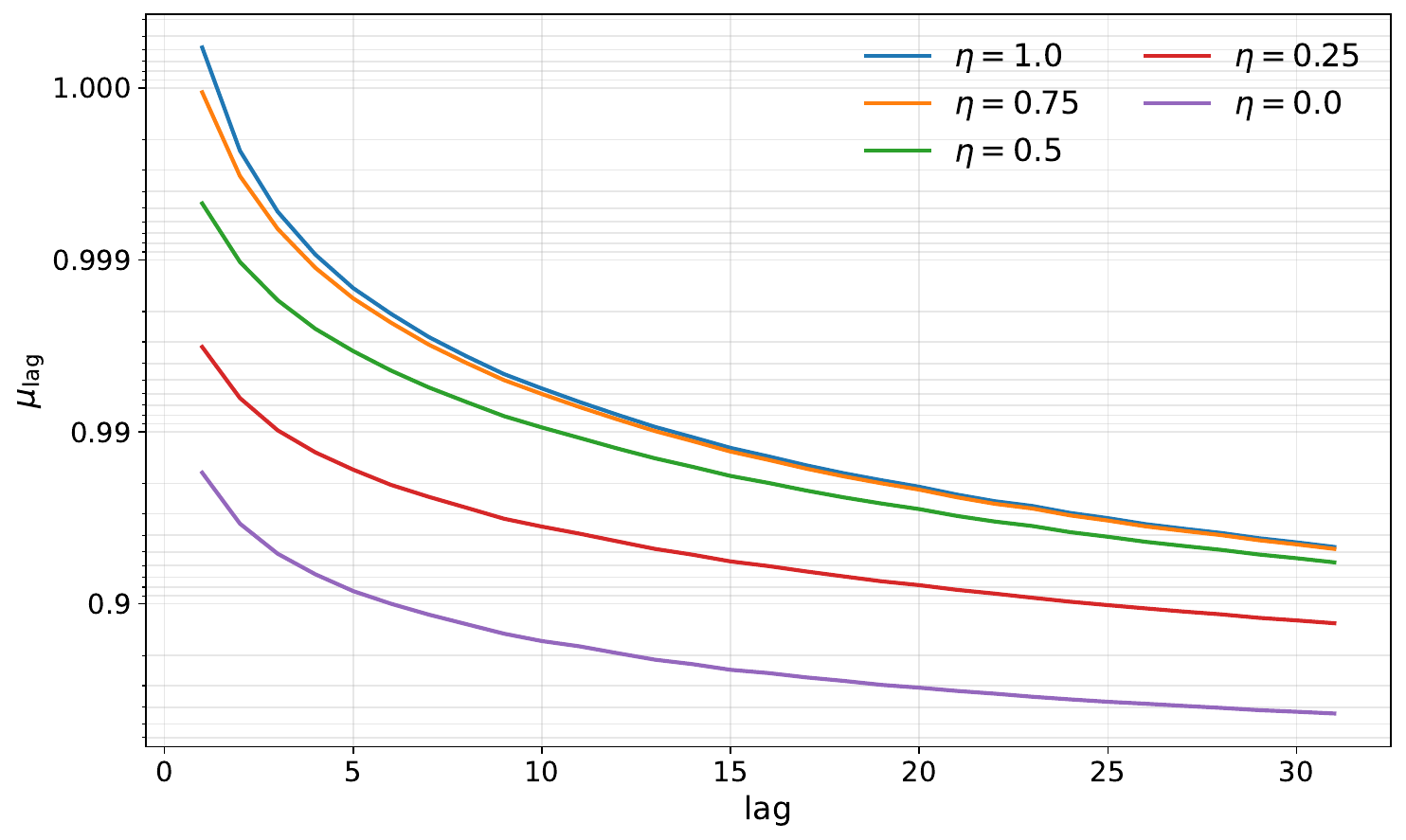}
    \caption{The metric $\mu_{\text{lag}}$ that we proposed in~\cref{sec:contdata} computed on the dynamical systems data embedded with a different level of continuity $\eta$. As $\eta$ decreases from $1$ to $0$, the metric $\mu_{\text{lag}}$ also decreases. This shows a consistency between the intuitive continuity captured by $\eta$ and the numerical continuity measured by our proposed $\mu_{\text{lag}}$ metric.}
    \label{fig:eta_mu}
\end{figure}

To define the normalized relative gap shown in~\Cref{tab:gap_heatmap}, given a model $\mathcal{M}$ and a continuity level $\eta$, let $e_\mathcal{M}^\eta$ denote the $\mathcal{L}_\infty$ forecasting error of $\mathcal{M}$ averaged over the test set. Let $\mathbb{M}$ denote the collection of all six models under consideration. We first define the relative gap of model $\mathcal{M}$ at continuity level $\eta$ by
\[
    \text{gap}_\mathcal{M}^\eta
    =
    \frac{e_\mathcal{M}^\eta - \min_{\mathcal{M}' \in \mathbb{M}} e_{\mathcal{M}'}^\eta}
    {\min_{\mathcal{M}' \in \mathbb{M}} e_{\mathcal{M}'}^\eta}.
\]
This quantity measures how far $\mathcal{M}$ is from the best-performing model at the same continuity level $\eta$. However, this metric still has one limitation when used to compare different architectures: each model comes with its own parameter count, architecture class, and inductive biases beyond continuity, so some models may be uniformly stronger or weaker across all values of $\eta$. As a result, the raw relative gap is appropriate for tracking how a fixed architecture responds to changing continuity, but it is less suitable for comparing the shape of this trend across architectures. To remove this architecture-dependent scale effect while preserving the relative variation of each model across continuity levels, we introduce the normalized gap
\[
    \overline{\text{gap}}_\mathcal{M}^\eta
    =
    \alpha_\mathcal{M}\,\text{gap}_\mathcal{M}^\eta,
    \qquad
    \alpha_\mathcal{M}
    =
    \frac{\frac{1}{|\mathbb{M}|}\sum_{\widetilde{\mathcal{M}} \in \mathbb{M}}
    \sum_{\eta'} \text{gap}_{\widetilde{\mathcal{M}}}^{\eta'}}
    {\sum_{\eta'} \text{gap}_\mathcal{M}^{\eta'}}.
\]
Equivalently, $\overline{\text{gap}}_\mathcal{M}^\eta$ is obtained by multiplying each row of relative gaps by a model-dependent constant so that all models have the same total gap mass across continuity levels. This normalization preserves the ratios between any two entries within the same row, and hence preserves how performance varies with continuity for each model, while making the rows more comparable across architectures.

Finally, we explain how the data in~\Cref{fig:continuoustasks} are obtained. For S4D, we directly collect data from the original S4D paper~\citep{gu2022parameterization}; for Transformers, experiments show that training a Transformer from scratch on LRA tasks does not lead to good performance, and we therefore adopt data induced by the pretraining method proposed in~\citet{amos2023never}. For S6,~\citet{yu2025block} shows that the performance on LRA can be significantly strengthened by initializing the weight $w_\Delta$ small, adopting a multihead-plus-bias structure, and using a complex parameterization. Many of these approaches can be viewed as a way to make the model more continuous; therefore, we do not adopt any of these methods. Instead, the data for S6 are obtained by maximizing the per-task performance reported in~\citet{alonso2025state},~\citet{sieber2024understanding},~\citet{fluderm_mlra}, and of the vanilla Mamba pretrained using the method proposed in~\citet{amos2023never}.

\clearpage

\section{Details of Experimemts in~\Cref{sec:accelerate}}\label{app:sec5exp}

In this appendix, we present the details of the experiments in the main text that aim to show how continuity can be leveraged to enhance the training efficiency of an S4D model on the LRA benchmark.

In~\Cref{fig:subsampledtraining}, we showed the times--accuracy plot for a vanilla S4D and a subsampled S4D trained on all six LRA benchmark tasks. There, we report the test accuracy of the best model, measured by
\begin{align*}
    &\text{test acc of the best model at epoch } K \\
    &\qquad= \text{test acc at epoch } \{\text{arg\,max}_{i \leq K} (\text{validation acc at epoch } i)\}.
\end{align*}
In~\Cref{fig:accepoch}, we further show the per-epoch test accuracy of the model, which gives us a more concrete understanding of how the stages transition. From~\Cref{fig:accepoch}, we note an important finding:
\begin{mdframed}
    For the more continuous tasks, such as Image, Pathfinder, and PathX, the loss of test accuracy is relatively low at each stage transition; for the more discrete tasks, such as ListOps, Retrieval, and Text, the loss of accuracy is sharp at each stage transition. This shows that for the continuous tasks, more information is inherited by the refined model at the start of each training stage.
\end{mdframed}

\begin{figure}[!htb]
    \centering
    \includegraphics[width=1\linewidth]{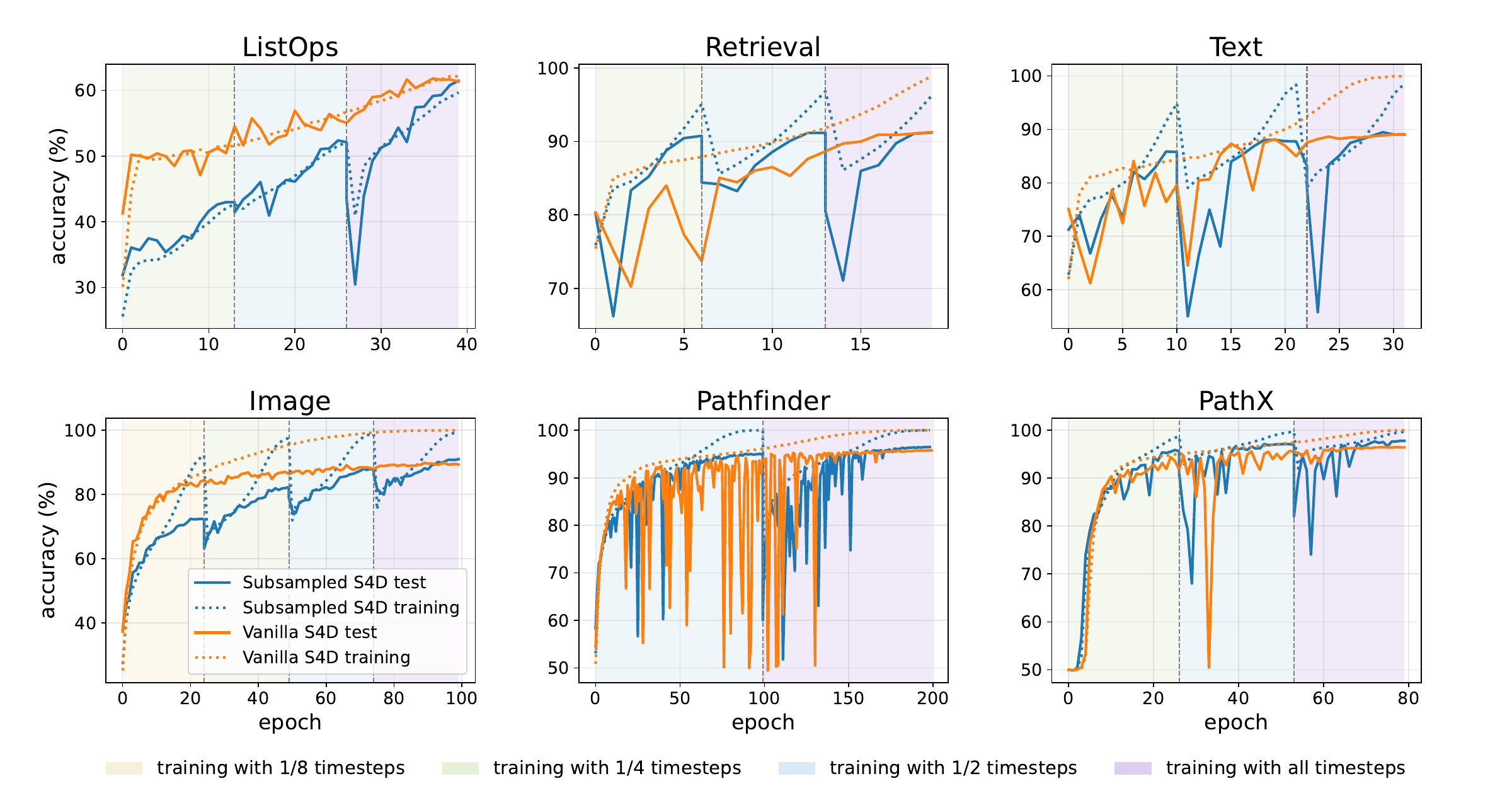}
    \caption{The per-epoch training and test accuracies of a naively trained S4D model and an S4D model trained with our subsampling method, respectively. We use a colorful background to indicate stages where models are trained on data with different temporal densities. Note that the background colors only apply to the models trained with the subsampling method, and that the sparser than timestamps are, the faster the training of the subsampled model is.}
    \label{fig:accepoch}
\end{figure}

For training the subsampled models, there are a few hyperparameters that we need to choose. We briefly explain them:
\begin{itemize}[leftmargin=*]
    \item \texttt{Initial Density}: This indicates the temporal density of the data we subsample at the beginning of the training. For example, if the full data has a length of $L = 1024$, and the training data we use in epoch zero has a length of $128$, then the initial density is $1/8$.
    \item \texttt{Subsampling Strategy}: This is a binary decision to make from ``indexing'' or ``pooling.'' Assuming that the initial density is $1/r$ If the subsampling is done by indexing, then given a sequence $\mathbf{u}_{0}, \mathbf{u}_1 \ldots, \mathbf{u}_{L-1}$, the subsampled sequence is $\mathbf{u}_0, \mathbf{u}_r, \ldots, \mathbf{u}_{\lfloor (L-1) / r \rfloor r}$. If the subsampling is done by pooling, then the subsampled sequence is $\mathbf{v}_0, \mathbf{v}_r, \ldots, \mathbf{v}_{\lfloor (L-1) / r \rfloor r}$, where
    \[
        \mathbf{v}_j = \frac{1}{r}\sum_{i=0}^{r-1} \mathbf{u}_{jr+i}.
    \]
\end{itemize}
Given an initial density, which we always choose to be an inverse power of $2$, we then evenly divide all training epochs into the resulting number of stages. While we only examine this way of division, other division strategies can be potentially interesting to explore.
The pooling strategy is especially useful when the underlying task is more discrete than continuous, for otherwise a subsampling by index may totally destroy the useful information in the data. In fact, among the six tasks in LRA, only the most continuous one (i.e., Image; see~\Cref{fig:continuoustasks}) can be trained using the indexing subsampling strategy. We report the hyperparameters that we use in~\Cref{tab:hyderparams}.

\begin{table}[!htb]
    \centering
    \caption{The hyperparameters \texttt{Initial Density} and \texttt{Subsampling Strategy} used in training the S4D models on Long-Range Arena tasks. The rest of the hyperparameters follow those reported in the original S4D paper~\citep{gu2022parameterization}.}
    \begin{tabular}{c|cccccc}
        \toprule
        Hyperparameter & ListOps & Retrieval & Text & Image & Pathfinder & PathX \\
        \midrule
        \texttt{Initial Density} & $1/4$ & $1/4$ & $1/4$ & $1/8$ & $1/2$ & $1/4$ \\
        \texttt{Subsampling Strategy} & pooling  & pooling & pooling & indexing & pooling & pooling \\
        \bottomrule
    \end{tabular}
    \label{tab:hyderparams}
\end{table}

Finally, to show that the subsampling method is helpful only when the underlying model is continuous, we show in~\Cref{fig:mambacifar} the results of applying the subsampling method on an S6 model trained on the Image task from LRA, in which case we see that most learned information is lost at each stage transition and the subsampling method does not help.

\begin{figure}[!htb]
    \centering
    \includegraphics[width=0.95\linewidth]{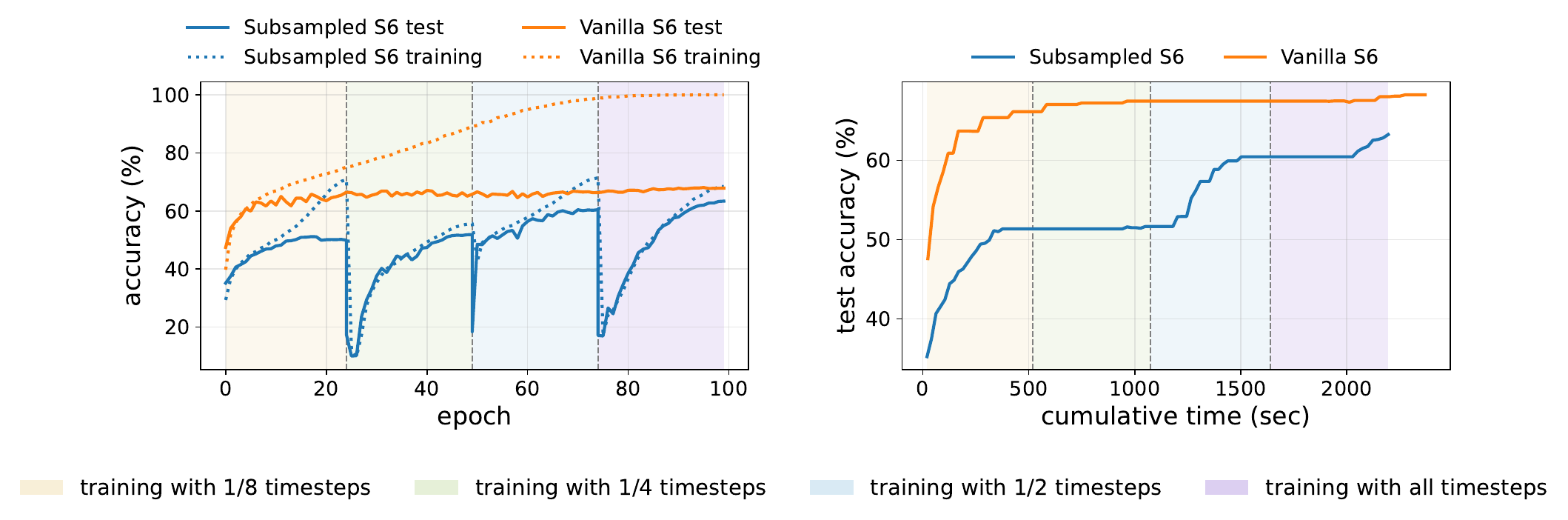}
    \caption{\textbf{Left:} The per-epoch training and test accuracies on the sCIFAR-10 task of a naively trained S6 model and an S6 model trained with our subsampling method, respectively. \textbf{Right:} Test accuracy plotted against cumulative training time on the sCIFAR-10 task. For more details, see the captions of~\Cref{fig:subsampledtraining,fig:accepoch}.}
    \label{fig:mambacifar}
\end{figure}

\clearpage

\section{Limitations}\label{app:limitation}

Our work takes a first step toward formalizing continuity as an inductive bias in sequential modeling, but several directions remain open. On the theory side, our main analysis is developed for SSM-like models, where the underlying continuous-time formulation provides a natural starting point; extending comparable guarantees to broader model classes, especially Transformers and hybrid architectures, would be an important next step. On the empirical side, although we study several representative model families and tasks, we do not attempt an exhaustive survey of the rapidly growing space of sequential architectures, and the paper should therefore be viewed as identifying and isolating one important bias rather than as a complete guide for model selection. Finally, while our experiments show a clear empirical alignment between model continuity and task continuity, a deeper theoretical explanation for why more continuous models are better suited to more continuous tasks remains open. We believe each of these directions is substantial enough to support further dedicated study.

\end{document}